\newtheorem{lemma}{Lemma}
\newtheorem{th-a}{Theorem}[subsection]
\newtheorem{theorem}{Theorem}
\newtheorem{proposition}{Proposition}
\newtheorem{definition}{Definition}
\newtheorem{cor-a}{Corollary}[subsection]
\newcommand{\eft}[1]{\hat{f}_{#1}}
\newcommand{\<}{\preceq}
\newcommand{\ct}[1]{{\mathrm{crit}}_{\alpha}({#1})}
\def\leq{\leqslant}
\def\Ll{{\mathcal{L}}}
\def\s{\sigma}
\def\ni{\noindent}
\def\X{{\mathcal{X}}}
\def\a{\alpha}
\def\v{\varepsilon}
\def\g{\gamma}
\def\1{1\!{\rm l}}
\def\efm{\hat{f}_m}
\def\ef{\tilde{f}}
\def\Tf{\widetilde{T}}
\def\T{{\mathcal{T}}}
\def\Fm{{{\mathcal{F}}}_m}
\def\Mn{{{\mathcal{M}}}_n}
\def\pen{{{\mathrm{pen}}}_n}
\def\argmin{{\mathop{{\mathrm{argmin}}}}}
\def\P{{{\mathrm{P}}}}
\def\E{{{\mathbb{E}}}}
\def\R{{\mathbb{R}}}
\def\L{{\mathbb{L}}}
\def\F{{\mathcal{F}}}
\def\pn{\par\noindent}
\newcounter{rem}
\newcommand{\Rem}{
  \smallskip
  \stepcounter{rem} 
  \pn {\bf Remark \arabic{rem}. }}
\title{Risk Bounds for CART Classifiers under a Margin Condition}
\author{Servane Gey\footnote{Servane.Gey@parisdescartes.fr, Laboratoire MAP5 - UMR 8145, Universit\'e Paris Descartes, 75270 Paris Cedex 06, France}}
\begin{document}

\maketitle

\begin{abstract}
Non asymptotic risk bounds for Classification And Regression Trees (CART) classifiers are obtained in the binary supervised classification framework under a margin assumption on the joint distribution of the covariates and the labels. These risk bounds are derived conditionally on the construction of the maximal binary tree and allow to prove that the linear penalty used in the CART pruning algorithm is valid under the margin condition.\\ It is also shown that, conditionally on the construction of the maximal tree, the final selection by test sample does not alter dramatically the estimation accuracy of the Bayes classifier. 
\end{abstract}

\ni {\it Keywords}: Classification, CART, Pruning, Margin, Risk Bounds.\\
{\it MSC 2010 classification}: P2010 62G99 62H99

\section{Introduction} \label{sec:intro}

The Classification And Regression Trees (CART) method proposed by
Breiman, Friedman, Olshen and Stone \cite{Brei84} in $1984$ consists in constructing
an efficient procedure that gives a piecewise constant estimator of a
classifier or a regression function from a training sample of
observations. This procedure is based on binary tree-structured partitions and
on a penalized criterion that selects ``good'' tree-structured
estimators among a huge collection of trees. It currently yields some easy-to-interpret and easy-to-compute estimators which
are widely used in many applications in Medicine,
Meteorology, Biology, Pollution or Image Coding  (see \cite{Chou89},
\cite{Wer98} for example). This type of procedure is often performed when the space of explanatory variables is
high-dimensional. Due to its recursive computation, CART needs few computations to provide classifiers, which accelerates the computation time drastically when the number of variables is large. It is now widely used in the genetics framework (see \cite{GeyLeb08} for example), or more generally to reduce variable dimension (see \cite{SauTul06} \cite{Mar06} for example).\\

\ni To construct a decision tree from a training sample of observations, the CART algorithm consists in constructing a deep dyadic recursive tree $T_{max}$ from the observations by minimizing some local impurity function at each step. Then, $T_{max}$ is pruned to obtain an uniquely defined finite sequence of nested trees thanks to a penalized criterion, whose penalty term is of the form 
\begin{eqnarray}
\pen(T) = \alpha \ \frac{|\Tf|}{n}, \label{Eq : PenCART}
\end{eqnarray}
where $\alpha$ is a tuning parameter, $n$ is the number of observations, and $|\Tf|$ is the size of the tree $T$, i.e. the number of leaves (terminal nodes) of $T$. Thus the CART algorithm can be viewed as a model selection procedure, where the collection of models is a collection of random decision trees constructed on the training sample of observations. In its pruning procedure, CART selects a small collection of trees within the whole collection of random trees. 
Then, a final tree belonging to the small collection thus constructed is selected either by cross-validation or by test sample. The present paper focuses on the test sample method. \\

\ni CART differs from the procedure proposed by Blanchard {\it et al.} \cite{blaSchRozMul07} in that the first large tree is constructed locally, and not in a global way by minimizing some loss function on the whole sample. For further results on the construction of the deep tree $T_{max}$, we refer to Nobel \cite{Nob97, Nob02}, and Nobel and Olshen \cite{NobOls96} about Recursive Partitioning.\\
In this paper, our concern is the pruning step which entails the choice of the penalty function \eqref{Eq : PenCART}: the linearity of the penalty term is fundamental to ensure that the whole information is kept in the obtained sequence. Gey {\it et al.} \cite{GeyNed05} addressed this question in the regression framework. Following this previous work, the present paper aims at validating the choice of the penalty in the two class classification framework. Former results on binary classification (see Nobel \cite{Nob02}, or Scott {\it et al.} \cite{ScoNow06} in the image context) provide optimal trees in terms of risk conditionally on the construction of the first large dyadic tree $T_{max}$. These trees are obtained by penalizing the empirical misclassification rate with a penalty term of the form
\begin{eqnarray} \label{eq:PenNob}
\pen(T) & = & \alpha \ \sqrt{\frac{|\Tf|\log n}{n}}. 
\end{eqnarray}
Unfortunately, as discussed by Scott in \cite{Sco05}, the pruning algorithm computed with non-linear penalties is computationally slower than the one using linear penalties, and provides subtrees that are not necessarily unique nor nested. \\ 

\ni The latter results are obtained without making any assumption on the joint distribution $\P$ of the variables. By adding an assumption on $\P$, we exhibit non-asymptotic conditional risk bounds for the tree chosen thanks to the usual CART algorithm as described above. These risk bounds improve those obtained in previous papers (see \cite{Nob02}, \cite{Sco05}, \cite{ScoNow06} for instance); they validate the form of the penalty \eqref{Eq : PenCART} used in the pruning step, and show that the impact of the selection via test sample is conveniently controlled.\\

\ni In this paper, we leave aside the problem of consistency of CART. CART is known to be non-consistent in many cases. Some results and conditions to obtain consistency can be found in Devroye {\it et al.} \cite{DevGyoLug96}. Furthermore, Section \ref{sec:risk} briefly presents consistent results for CART based on the risk bounds obtained.

\bigskip

The outline is the following. Section \ref{sec:cart} gives the general framework of binary classification, an overview of the CART procedure, and introduces the methods and notations used in the following sections.
Section \ref{sec:risk} presents the main theoretical results for classification trees: Theorem \ref{thfin} bears on the whole procedure, while Propositions \ref{pruM1}, \ref{pruM2} concern the pruning procedure and Proposition \ref{FS} concerns the final step. Section \ref{sec:ccl} offers propects about the margin effect on classification trees. Proofs are gathered in Section \ref{sec:append}.

\section{Classification with CART} \label{sec:cart}

\subsection{Binary classification} \label{subsec:classif}

The CART method is used in the following general classification framework. Suppose one observes a sample of $N$ independent copies \\ $(X_1,Y_1), \ldots, (X_N,Y_N)$ of the random variable $(X,Y)$, where the
 explanatory variable $X$ takes values in a measurable space $\X$ and is
associated with a label $Y$ taking values in $\{0,1\}$. A  classifier is then any function $f$ mapping $\X$ into $\{0,1\}$. Its quality is measured by its misclassification rate $$\P(f(X)\neq Y),$$ where
$\P$ denotes the joint distribution of $(X,Y)$. If $\P$ were known, the problem
of finding an optimal classifier minimizing the misclassification rate would
be easily solved by considering the Bayes classifier $f^*$ defined for every
$x\in \X$ by
\begin{equation} \label{bayes}
f^*(x)=\1_{\eta(x)\geqslant 1/2},
\end{equation}
where $\eta(x)$ is the
conditional expectation of $Y$ given $X=x$, that is
\begin{equation} \label{eta}
\eta(x)=\P\left[Y=1 \ | \ X=x\right],
\end{equation}
and $\1$ denotes the indicator function. As $\P$ is unknown, the goal is to construct from the sample
$\{(X_1,Y_1),\ldots,(X_N,Y_N)\}$ a classifier $\ef$ that is as close as
possible to $f^*$ in the following sense: since $f^*$ minimizes the
misclassification rate, $\ef$ will be chosen in such a way that its
misclassification rate is as close as possible to the misclassification rate
of $f^*$, i.e. in such a way that the loss
\begin{equation} \label{loss}
l(f^*,\ef)=\P(\ef(X)\neq Y)-\P(f^*(X)\neq Y)
\end{equation}
is as small as possible. Then, the quality of $\ef$ will be measured by its risk, i.e, the expectation with respect to the sample distribution
\begin{eqnarray} \label{risk}
\E[l(f^*,\ef)].
\end{eqnarray}

\ni Numerous papers have dealt with the issue of predicting a label from an input $x\in \X$ via the construction of a classifier (see for example \cite{AizBraRoz70}, \cite{VapCher74}, \cite{DevGyoLug96}, \cite{ShaFreBarSun98}, \cite{HasTibFri01}). There is a large collection of methods coming both from computational and statistical areas and based on learning a classifier from a learning sample, where the inputs and labels are known. For a non exhaustive yet extensive bibliography on this subject, we refer to Boucheron {\it et al.} \cite{BouBouLug05}. \\ The classifiers considered in the present paper are classical empirical risk minimizers (also referred to as ERM classifiers), where the empirical misclassification rate on a sample ${\mathcal{E}}$ of size $m$ is defined, for any classifier $f$, by
\begin{eqnarray} \label{contrast}
\P_{m}(f)=\frac{1}{m}\sum_{(X_i,Y_i)\in {\mathcal{E}}}\1_{Y_i\neq f(X_i)}.
\end{eqnarray} 
\ni The ERM classifier $\ef$ studied here is computed by classical hold out: the sample $\left\{(X_1,Y_1); \ldots; (X_N,Y_N)\right\}$ of the random variable $(X,Y)\in \X\times \{0,1\}$ is split in two independent subsamples: a learning sample $\Ll$ of size $n_l$ and a test sample $\T$ of size $n_t$, with $n_l+n_t=N$. A collection of ERM classifiers is computed by minimizing $\P_{n_l}$ (equation \eqref{contrast} with ${\mathcal{E}}=\Ll$) on a collection of models, and the final classifier $\ef$ is computed by minimizing $\P_{n_t}$ (equation \eqref{contrast} with ${\mathcal{E}}=\T$) over the collection obtained in that way.\\

\subsection{CART classifiers} \label{subsec:cart}

The CART algorithm provides piecewise constant classifiers represented by binary decision trees. An example of the latter is given in Figure \ref{tree} for a couple of covariates $(X^1,X^2)$ belonging to $\X=[0;1]^2$.

\begin{figure}[ht!]
\begin{center}
~\hspace{-1cm} \includegraphics[width=7cm, height=6cm]{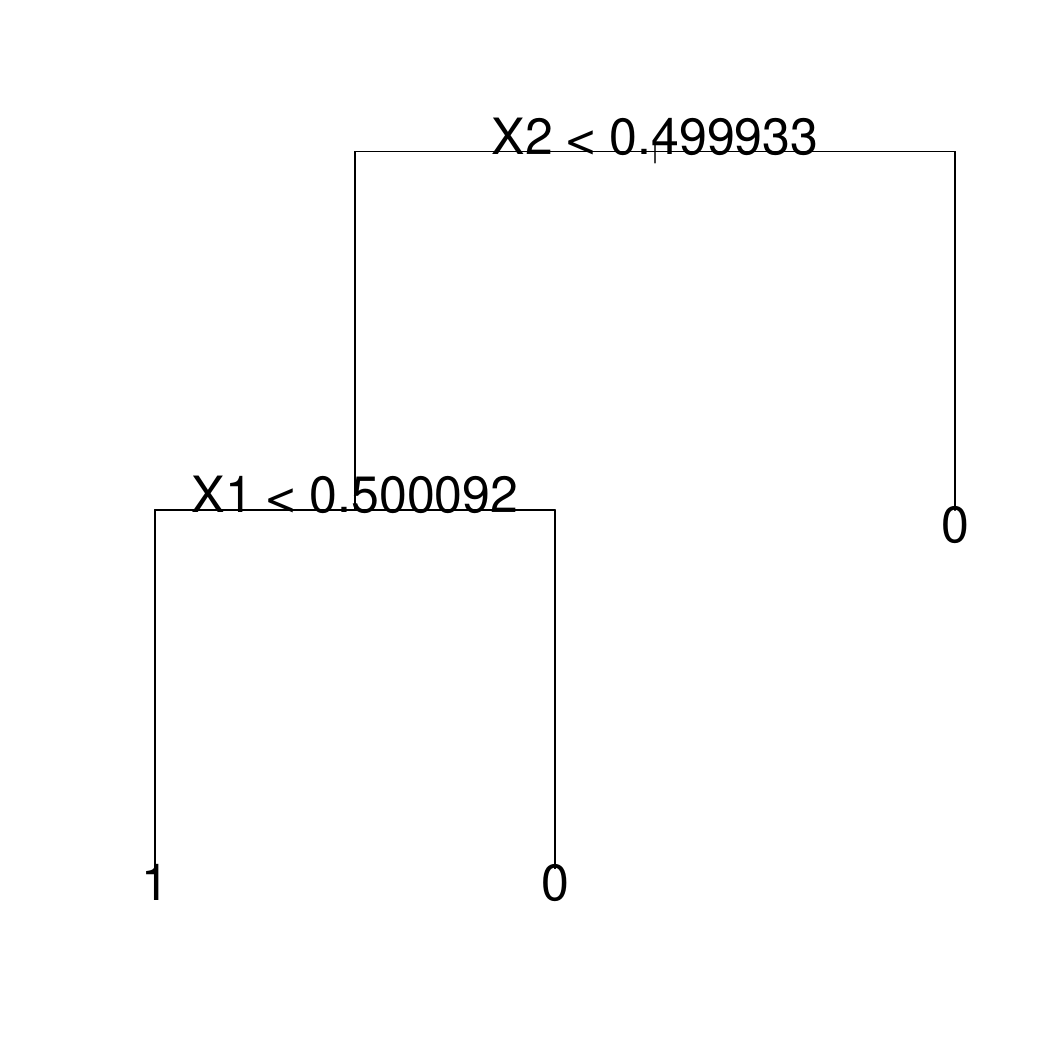}
\includegraphics[width=7cm, height=6cm]{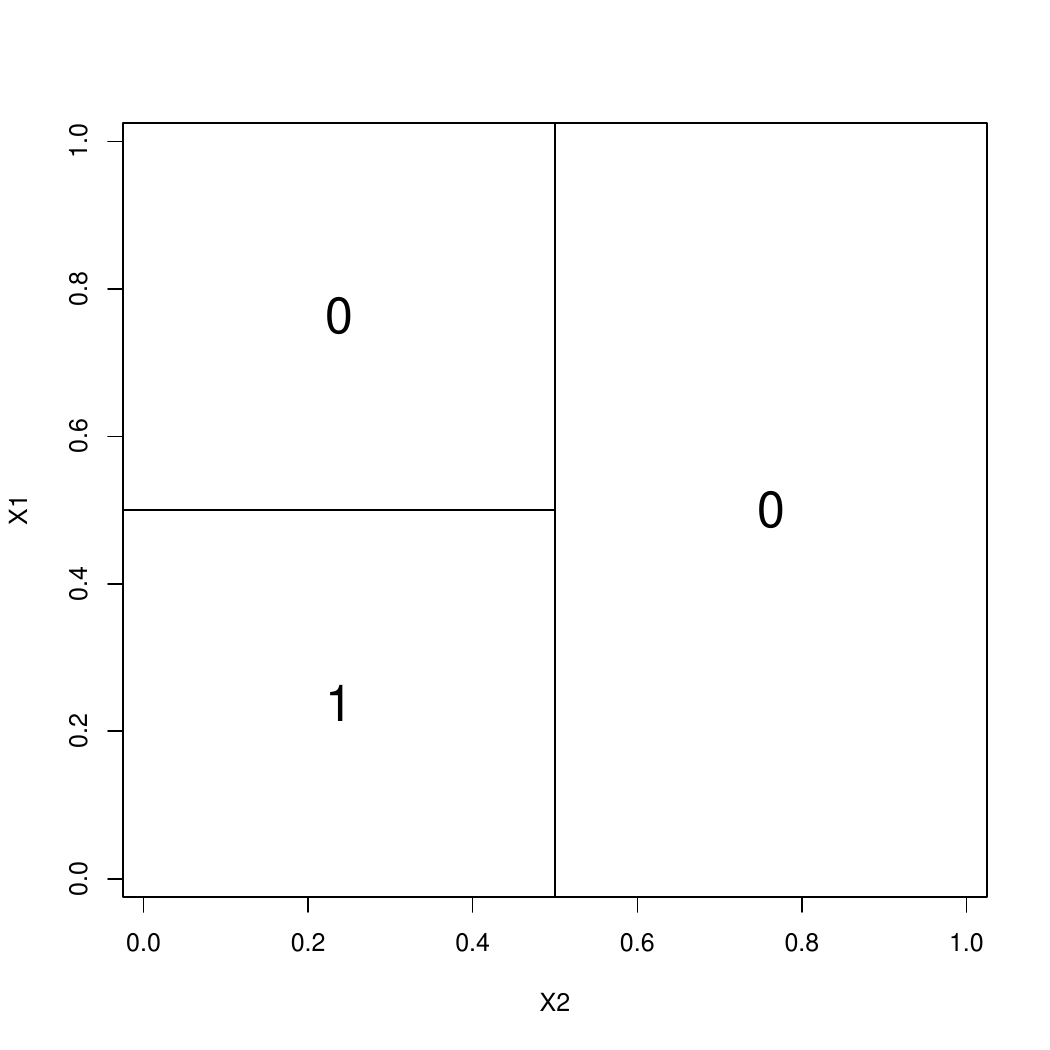}
\end{center}
\caption{Decision tree example (left) and its associated partition (right).} \label{tree}
\end{figure}

\ni The tree on the left hand side of Figure \ref{tree} defines the partition of $\X$ represented on the right hand side  of Figure \ref{tree}: each question asked on an internal node relates to a split in $\X$. If the answer to the question is positive, go to the left child node, if not, go to the right child node. 
Hence the first question corresponds to a two-part partition of the covariate space. Then, each part is split into two subparts, and so on. 
Thus $\X$ is associated to the so called {\it root} of the tree, and the final partition is associated to the terminal nodes, also called {\it leaves}, of the tree. Hence each node of the tree represents a subset of the covariates space defined by the successive questions. The final partition is given by the leaves of the tree. 
Finally, a predictive value for the dependent variable is associated to each leaf. Thus, if $\Tf$ denotes the set of leaves of a decision tree $T$, the classifier $f_T:\X \mapsto \{0;1\}$ defined on $\Tf$ can be written as
\begin{eqnarray}
 f_T=\sum_{t\in \Tf}a_t\1_t, \label{equ:classarbre} 
 \end{eqnarray}
where $a_t\in \{0;1\}$ and $\1_t(x)=1$ if $x$ falls in the leaf $t$, $\1_t(x)=0$ otherwise.\\

\subsection{The CART algorithm} \label{subsec:algo} 

CART is based on a recursive partitioning using a class ${\mathcal{S}}$ of subsets of $\X$ which determines the question to be asked at each internal node of the tree. Below, we consider general classes ${\mathcal{S}}$ with
finite Vapnik-Chervonenkis dimension, henceforth referred to as VC-dimension (for a
complete overview of the VC-dimension see \cite{Vap98}). Let us notice that, theoretically, CART can be performed with any kind of split class ${\mathcal{S}}$, but, in practice, the more frequently used class is that of half spaces of $\X$ with axis-parallel  frontiers (which corresponds to axis-parallel cuts) for computational reasons.\\

\ni To begin with, a collection of CART classifiers is constructed by using learning sample $\Ll$. This collection is computed in two steps, called the growing algorithm and the pruning algorithm. The growing algorithm allows to construct a maximal binary tree $T_{max}$ from the data by recursive partitioning, and then the pruning algorithm allows to select a finite collection of subtrees of $T_{max}$.\\
\ni Since our main interest in this paper is the pruning algorithm, we skip the growing algorithm (for more details about the growing algorithm, see \cite{Brei84}). Just notice that the maximal tree $T_{max}$ is constructed from the learning sample in such a way that, at the end of the algorithm, its leaves are pure, i.e, contain only observations having the same label.\\

\ni Then, to avoid overfitting, a decision tree  having good predictive performance has to be selected among all possible subtrees pruned from $T_{max}$. Let us recall that a pruned subtree of $T_{max}$ is defined as any binary subtree of $T_{max}$ having the same root (denoted $t_1$) as $T_{max}$. As mentioned in \cite{Brei84}, looking at the whole family of subtrees pruned from $T_{max}$ is an NP-hard problem. Then, a good alternative to the exhaustive search is the pruning algorithm, which is computed as follows.\\

\ni First, let us introduce some notations:
\begin{enumerate}[(i)]
\item For a tree $T$, $t$ is the general notation for a node of $T$ and, if $t$ is an internal node, $T_t$ denotes the {\it branch} of $T$ issued from $t$, that is the subtree of $T$ whose root is $t$. 
\item For a tree $T$, $\Tf$ denotes the set of its leaves and $|\Tf|$ the
  cardinality of $\Tf$.
\item Take two trees $T_1$ and $T_2$. Then, if $T_1$ is a pruned subtree of
  $T_2$, write $T_1\<T_2$.
\end{enumerate}
\ni In the meantime, let us denote by $n$ the size of the sample used to prune $T_{max}$; in the methods detailed below, we will see that, in any case, $n\leq n_l<N$, where $n_l$ is the size of the learning sample $\Ll$.\\ Second, let us notice that, given a tree $T$ and $\F_T$ the set of classifiers defined on $\Tf$ as defined by \eqref{equ:classarbre}, the ERM classifier on $\F_T$ is
\begin{eqnarray*}
\eft{T} & = & \argmin_{f\in \F_T}\P_n(f)\\
        & = & \sum_{t\in \Tf} \hat{y}_t\1_t,
\end{eqnarray*}
\ni where $\P_n$ is the empirical misclassification rate defined by
\eqref{contrast}, and $\hat{y}_t \in \{0;1\}$ is the majority vote inside the
leaf $t$. Thus, if $t$ is an internal node of $T$, $\eft{|T_t}$ denotes the restriction of $\eft{T}$ to the sub-partition associated with the leaves of the branch $T_t$, and $\P_n(t)=n^{-1}\sum_{\{X_i\in t\}}\1_{\hat{y}_t\neq Y_i}$ denotes the weighted misclassification rate inside the node $t$.\\
\ni Third, given any subtree $T\< T_{max}$ and $\a>0$, one defines 
\begin{eqnarray} \label{cartcrit}
\ct{T}=\P_{n}(\eft{T})+\a\frac{|\Tf|}{n}.
\end{eqnarray}
the penalized criterion of $T$ for the so called temperature $\a$, and $T_{\a}$ the subtree of $T_{max}$ satisfying:
  \begin{enumerate}[\hspace{0,3cm} (i)]
  \item $T_{\a}=\argmin_{T\< T_{max}}\ct{T}$,
  \item if $\ct{T}=\ct{T_{\a}}$, then $T_{\a}\< T$.
  \end{enumerate}

\ni Thus $T_{\a}$ is the smallest minimizing subtree for the temperature
$\a$. The existence and the unicity of $T_{\a}$ are proved in \cite[pp 284-290]{Brei84}. \\ The pruning algorithm's principle is to raise temperature $\a$, and to record the corresponding $T_{\a}$. The algorithm is summarized in Table \ref{tab:pruning} (see \cite[pp 59-92]{Brei84} for a complete overview). \\

\begin{table}[ht!]
\begin{center}
\begin{tabular}{| r | l |}
\hline
  & {\bf Pruning algorithm}\\
\hline
 & \\
 {\bf Input} & Binary decision tree $T_{max}$.\\
                 & \\
{\bf Initialization} & $\a_1=0$, $T_1=T_{\a_1}=\argmin_{T\< T_{max}}\P_n(\eft{T})$.\\
                           & {\tt Set} $T=T_1$ and $k=1$.\\
                           & \\
{\bf Iteration}   & {\tt While $|\Tf|>1$}, \\
                       & \hspace{0.5cm} {\tt Compute}\\
                       & \hspace{0.8cm} $\a_{k+1} = \displaystyle{\min_{\{t \ \mbox{internal node of} \ T\}}\frac{\P_n(t)-\P_n(\eft{|T_t})}{|\Tf_t|-1}}$.\\
                       & \hspace{0.5cm} {\tt Prune} all branches $T_t$ of $T$ verifying \\
                       & \hspace{0.8cm} $\P_n(\eft{|T_t})+\a_{k+1}|\Tf_t|=\P_n(t)+\a_{k+1}$\\
                       & \hspace{0.5cm} {\tt Set} $T_{k+1}$ the pruned subtree obtained in that way.\\
                       & \hspace{0.5cm} {\tt Set}  $T=T_{k+1}$ and $k = k+1$.\\
                       & \\
{\bf Output}      & Trees $T_1\succ \ldots  \succ T_K=\{t_1\}$,\\
                        & Temperatures $0=\a_1<\ldots< \a_K$.\\
                       & \\
\hline
\end{tabular}
\caption{CART pruning algorithm.} \label{tab:pruning}
\end{center}
\end{table}

\Rem
\begin{enumerate}[1) ]
\item $T_1$ is the smallest subtree for temperature $0$, so it is not necessarily equal to $T_{max}$. 
\item $T_{max}$ are $T_1$ are constructed in such a way that, for all $T\< T_1$ and all internal node $t$ of $T$, $\P_n(t) > \P_n(\eft{|T_t})$; hence, $\a_k>0$ for all $k>1$.
\item The pruning algorithm is designed to catch, at each iteration $k$, the minimal temperature $\a_{k+1}$ for which the overall energy is kept, that is for which ${\mathrm{crit}}_{\a_{k+1}}(T_{k+1})={\mathrm{crit}}_{\a_{k+1}}(T_k)$. This property results directly from the linearity of the penalty used in criterion \eqref{cartcrit}.
\end{enumerate}

\ni Finally, the selection of a tree among the sequence $(T_k)_{1\leq k\leq K}$ is made by using test sample $\T$: choose $\hat{k}$ as
\begin{eqnarray} \label{estim}
 \hat{k}=\underset{\{1\leq k\leq K\}}{\argmin}\left[\P_{n_t}(\eft{T_k})\right],
\end{eqnarray}
\ni where $\P_{n_t}$ is the empirical misclassification rate on $\T$ as defined by \eqref{contrast}.
Then, the final CART classifier is $$\ef = \eft{T_{\hat{k}}}.$$

\subsection{Properties of the pruned subtrees sequence}

It may be easily seen that the computational complexity of the pruning algorithm is linear with respect to the number of nodes of $T_{max}$. Hence, the pruning algorithm is interesting in two ways:
\begin{enumerate}[1)]
\item It reduces drastically the computational complexity of the exhaustive search from ${\mathcal{O}}(n^2)$ to ${\mathcal{O}}(n\log{n})$ (see \cite{Sco05} for instance),
\item It provides a small collection of trees that can be easily evaluated on $\T$.
\end{enumerate}

\ni Thus, to ensure that the CART algorithm provides good classifiers, it is important to verify that
\begin{itemize}
\item pruning is like looking at the entire family of pruned subtrees according to penalized criterion \eqref{cartcrit},
\item pruning provides trees having good performance in term of risk conditionally on the growing algorithm,
\item using a test sample does not alter too much the performance of the tree thus selected.
\end{itemize}

\ni The first point has already been established by Breiman {\it et al.} \cite{Brei84}:
\begin{th-a}[Breiman, Friedman, Olshen, Stone \cite{Brei84}] \label{thbr}
~\\
For all $k \in \{1,\ldots,K\}$, $T_k=T_{\a_k}$ and, for all $\a>0$, there exists $k \in \{1,\ldots,K\}$ satisfying $T_{\a}=T_k$.
\end{th-a}
\ni Theorem \ref{thbr} ensures that
\begin{enumerate}[1)] 
\item the trees of the sequence are unique and minimize penalized criterion \eqref{cartcrit} for known temperatures,
\item whatever the choice of the temperature $\a$ used in the penalized criterion \eqref{cartcrit}, $T_{\a}$ belongs to the sequence. 
\end{enumerate}
\ni Thus, the definition of $T_{\a}$ leads to an infinite collection of trees over all real $\a$, but only finitely many trees are possible according to criterion \eqref{cartcrit}. \\

\ni To the best of our knowledge, the fact that the classifiers provided by CART perform well in terms of conditional risk remains to be seen. To proceed, two methods are applied to construct the sequence $(T_k)_{1\leq k\leq K}$. These methods, as well as the general notations and assumptions refered to in this paper, are presented in the next section.

\section{Methods, Notations and Assumptions} \label{sec:method}

\subsection{Methods and notations} \label{subsec:notations}

\ni For a given tree $T$, $\F_T$ will denote the set of classifiers defined on the partition given by the leaves of $T$, that is 
\begin{equation} \label{eq:model}
\F_T=\left\{\sum_{t\in \Tf}a_t\1_t \ ; \ (a_t)\in \{0,1\}^{|\Tf|}\right\}.
\end{equation}
Thus $\eft{T}=\sum_{t\in \Tf} \hat{y}_t\1_t$ is the ERM classifier on $\F_T$. \\

\ni The two different methods applied in the CART pruning algorithm are:
\begin{enumerate}[\hspace{0,5cm} M1:]
\item $\Ll$ is split in two independent parts $\Ll_1$ and $\Ll_2$ containing respectively $n_1$ and $n_2$ observations, with $n_1+n_2=n_l=N-n_t$. Hence $T_{max}$ is constructed using $\Ll_1$, then pruned using
  $\Ll_2$. This method is applied in Gelfand {\it et al.} \cite{GelRavDel91} for instance.
\item $T_{max}$ is constructed and pruned using sample $\Ll$ entirely. This is the most commonly used method in the CART literature and its applications.
\end{enumerate}
\ni Note that 
a penalty is needed in both methods in order to reduce the number of candidate
tree-structured models contained in $T_{max}$. Indeed, if one does not
penalize, the number of models to be considered grows exponentially with $N$ (see \cite{Brei84}). So making a selection by using a test sample without penalizing requires visiting all the models. As mentioned above, looking for the best model in the collection of all subtrees pruned from the maximal one becomes explosive. Hence pruning allows to reduce significantly the number of trees taken into account.
\ni With both M1 and M2 methods, $\T$ is used to select a tree among the pruned sequence. Let us mention that $\T$ usually represents 10\% of the data and is randomly taken in the original sample, except if the design is fixed. In that case one takes, for example,
one observation out of ten to obtain the test sample. In a similar way, for the M1 method $\Ll_1$ and $\Ll_2$ are taken randomly in $\Ll$, except if the design is fixed, in which case one takes one observation out of two for instance.\\

\ni Methods M1 and M2 involve different treatments for the risks of the CART classifiers thus obtained. Indeed, by conditioning with respect to the sample used to perform the growing algorithm, $T_{max}$ becomes deterministic with M1, while it implies random models depending on the sample used to prune $T_{max}$ with M2. In the latter case, union bounds on the family of all possible trees that can be constructed on the grid $\{X_i \ ; \ (X_i,Y_i)\in \Ll\}$ are used to obtain risk bounds. This allows to obtain risk bounds only conditionally on this grid instead of conditionally on the grid and the labels. To simplify the notations, we define the loss and the $\L^2$ distance corresponding with either method M1 or M2.
\begin{definition} \label{def:perte}
The loss of a classifier $f$ is defined by $\lambda(f^*,f)$, and is computed as follows:\\
\ni $(i)$ if $\ef$ is constructed via M1, $\lambda(f^*,f):= l(f^*,f)$, with $l$ defined by \eqref{loss}.\\
\ni $(ii)$ if $\ef$ is constructed via M2,
\begin{eqnarray*}
\lambda(f^*,f) & := & \E\left[\P_{n_l}(f)-\P_{n_l}(f^*) \ | \ X_i \ ; \ (X_i,Y_i)\in \Ll \right]\\
                      & = & \frac{1}{n_l}\sum_{\{X_i \ ; \ (X_i,Y_i)\in \Ll\}}\left|2\eta(X_i)-1\right|\1_{f(X_i)\neq f^*(X_i)}, 
\end{eqnarray*}
where $\P_{n_l}$ is the empirical misclassification rate on $\Ll$ defined by \eqref{contrast}, and $\eta$ is defined by \eqref{eta}. 
\end{definition}
\ni Since $l(f^*,f)=\E\left[\left|2\eta(X)-1\right|\1_{f(X)\neq f^*(X)}\right]$ for all classifier $f$ (see \cite{DevGyoLug96} for instance), $\lambda$ is just the empirical version of $l$ on the grid $\{X_i \ ; \ (X_i,Y_i)\in \Ll\}$ in the M2 case.
\begin{definition} \label{def:variance}
The $\L^2$ distance between two classifiers $f$ and $g$ is defined by $d(f,g)$, and is computed as follows:\\
\ni $(i)$ if $\ef$ is constructed via M1, $d^2(f,g):= \E\left[(f(X)-g(X))^2\right]$.\\
\ni $(ii)$ if $\ef$ is constructed via M2, $$d^2(f,g) := d^2_{n_l}(f,g) =  \frac{1}{n_l}\sum_{\{X_i \ ; \ (X_i,Y_i)\in \Ll\}}\left(f(X_i)-g(X_i)\right)^2,$$ 
\end{definition}
\ni As for $\lambda$, $d$ is the empirical version of the $\L^2$ distance on the grid $\{X_i \ ; \ (X_i,Y_i)\in \Ll\}$ with M2.

\Rem If the design is fixed, $\lambda$ and $d$ are different according to the method only through the grid on which they are computed (the grid of method M1 being obtained from the one of method M2 by taking one point out of two). In this case, $\lambda$ and $d$ are no more random.\\

\ni We based our computation of risk bounds for the ERM classifiers provided by CART on recent results (see for instance \cite{MamTsy99}, \cite{Tsy04}, \cite{TsyGee05}, \cite{MasNed06}, \cite{Kol06, Kol06rej}, \cite{Mas07}, \cite{Lec07}, \cite{KohKrz07}). They stem from Vapnik's results (see \cite{Vap98}, \cite{Lug02} for example), showing that, without any assumption on the joint distribution $\P$, the penalty term used in the penalized criterion for the model selection procedure should be taken proportional to $\sqrt{|\Tf|/n_l}$ to obtain classifiers optimal in term of conditional risk (see \cite{Nob02, ScoNow06} for instance). Nevertheless, it has also been shown that, under the overoptimistic zero-error assumption (that is $Y=\eta(X)$ almost surely, where $\eta$ is defined by (\ref{eta})), this penalty term should be taken proportional to $|\Tf|/n_l$, as done in criterion \eqref{cartcrit}. Since we aim at validating the choice of the penalized criterion \eqref{cartcrit} in contexts less restrictive than the zero-error one, we consider weaker assumptions on $\P$.

\subsection{Margin assumptions} \label{subsec:margin}

Margin assumptions are now widely known to improve risk bounds of ERM classifiers in the binary classification context. One of the best-known margin assumptions is that of Mammen and Tsybakov \cite{MamTsy99} that may be written as follows:
\begin{description}
\item {\bf MA(MT)} There exist some constants $C>0$ and $\kappa>1$ such
  that, for all $t>0$,
\begin{eqnarray} \label{marginTsy}
\P\left(|2\eta(X)-1|\leqslant t\right)\leqslant C \ t^{\frac{1}{\kappa-1}},
\end{eqnarray}
\end{description}
where $\eta$ is defined by \eqref{eta}. {\bf MA(MT)} implies the more intuitive assumption considered by Massart and Nedelec in \cite{MasNed06} (see also the slightly weaker condition proposed in \cite{KohKrz07}): taking $t=h\in ]0;1[$ and the limit value $\kappa=1$, {\bf MA(MT)} leads to
\begin{description}
\item {\bf MA(MN)} $\exists h\in ]0;1[ \ \ \ \P\left(|2\eta(X)-1|\leq h\right)=0$.
\end{description}
Assumption {\bf MA(MN)} means that $(X,Y)$ is sufficiently well distributed to ensure that there is no region in $\X$ for which the toss-up strategy could be favored over others: $h$ can be viewed as a measurement of the gap between labels 0 and 1 in the sense that, if $\eta(x)$ is too close to $1/2$, then choosing 0 or 1 will not make a real difference for that $x$. \\ In assumption {\bf MA(MT)}, $\eta$ can be continuous, but has to cross the line $\eta(x)=1/2$ in a non smooth way.\\

\ni From this simple example, the so called {\it margin} $h$ can be viewed as a noise level for the classification problem. From this point of view, margin assumptions have been generalized by Koltchinskii in \cite{Kol06}; they compare directly the loss $l$ defined by \eqref{loss} with some kind of "noise variance" related to the $\L^2$ distance to the Bayes classifier $f^*$: 
\begin{description}
\item {\bf MA(K)} There exists some strictly convex positive function $\varphi$ satisfying $\varphi(0)=0$ such that, $$\forall f:\X \ \rightarrow \ \{0;1\} \ \ \ l(f^*,f) \geqslant \varphi\left(\sqrt{\E\left[(f(X)-f^*(X))^2\right]}\right)$$
\end{description}
It is easy to check that {\bf MA(MT)} and {\bf MA(MN)} imply {\bf MA(K)} with $\varphi(x)=C_{\kappa}\displaystyle{x^{\frac{2\kappa}{2\kappa-1}}}$ and $\varphi(x)=h x^2$ respectively.

\ni \Rem Taking $h>1$ in {\bf MA(MN)} (or more generally $\varphi(x)> x^2$ in {\bf MA(K)}) has no sense since, for any classifier $f$, (see \cite{DevGyoLug96} for instance) $$l(f^*,f)=\E\left[|2\eta(X)-1|\left(f(X)-f^*(X)\right)^2\right]\leqslant \E\left[(f(X)-f^*(X))^2\right].$$

\medskip

\ni {\bf MA(MT)} (with $\kappa>1$) and {\bf MA(MN)} (with $\kappa=1$) lead to risk bounds suggesting that the empirical misclassification rate of $\eft{T}$ have to be penalized by a term proportional to $\left(|\Tf|/n_l\right)^{\kappa/(2\kappa-1)}$ to obtain ERM classifiers optimal in terms of risk (see also \cite{TsyGee05} for instance), while {\bf MA(K)} leads to more general penalty terms given by strictly concave functions of $|\Tf|/n_l$. Hence these margin assumptions make the link between the ``global'' pessimistic case (without any assumption on $\P$) and the zero-error case by considering some noise level of the classification problem. 
More recent results (see \cite{Kol06, Kol06rej}, \cite{ArlBar08} for instance) deal with data-driven penalties based on local Rademacher complexities also derived from margin assumptions. \\

\ni As it can be seen in \cite{Brei84}, the CART pruning algorithm looks at the entire family of pruned subtrees according to criterion \eqref{cartcrit} only if the penalty taken in the criterion is linear. Thus, it follows from the above mentioned results that the following margin assumption has to be fulfilled: 
\begin{description}
\item {\bf MA(1)} $\exists h\in ]0;1[ \ \ \ \forall f:\X \mapsto \{0;1\} \ \ \ \lambda(f^*,f)\geqslant h d^2(f^*,f)$,
\end{description}
where $\lambda$ and $d$ are defined in Definitions \ref{def:perte} and \ref{def:variance} respectively.\\

\ni {\bf Examples}: 
\begin{enumerate}[1) ]
\item Take $X=(X^1,\ldots ,X^d)$ uniformly distributed on $[0;1]^d$. The associated label is designed as follows: if $X^j\leq 1/2$ or $X^j>1/2$ for all $j=1,\ldots ,d$, then $Y=1$ with probability $q$; otherwise $Y=1$ with probability $1-q$.
\item Take $X=(X^1,X^2)$ such that $X^1$ and $X^2$ are independently generated with gaussian distribution ${\mathcal{N}}(0,1)$. The associated label is designed as follows: If $X^1 > 0$ and $X^2>0$ then $Y=1$ with probability $q$, otherwise $Y=1$ with probability $1-q$.
\end{enumerate}
\ni In these two simple examples, if $q\neq 1/2$, {\bf MA(MN)}, and consequently {\bf MA(1)}, is satisfied with any value of $h$ satisfying $0<h<|2q-1|$ in both M1 and M2 cases; indeed $\eta(X)=q$ or $\eta(X)=1-q$, depending on where $X$ falls. Examples in which {\bf MA(1)} fails can be found in \cite{ArlBar08}.\\

\ni Below, we prove that, under {\bf MA(1)}, the penalty used by CART in criterion \eqref{cartcrit} for the pruning step leads to classifiers having good performance.\\ In the remaining part of this paper, the constant $h$ will denote the so called {\it margin}.

\section{Risk Bounds} \label{sec:risk}

This section is devoted to the results obtained on the performance of the CART classifiers for both M1 and M2 methods. These performance are regarded from the risk viewpoint presented in paragraph \ref{subsec:classif}, where classifiers are considered as estimators of the Bayes classifier $f^*$. The risk of the classifier $\ef$ provided by the CART algorithm is compared to those of the collection $\left(\eft{T}\right)_{T\< T_{max}}$ conditionally on the construction of $T_{max}$.\\
We shall first present a general theorem, then give more precise results about the last two parts of the algorithm, which are the pruning algorithm and the final selection by test sample.

\begin{theorem} \label{thfin}
Given $N$ independent pairs of variables $((X_i,Y_i))_{1\leq i\leq N}$ of
common distribution $\P$, with $(X_i,Y_i)\in \X\times \{0,1\}$, let us consider
the estimator $\ef$ (\ref{estim}) of the Bayes classifier $f^*$ (\ref{bayes})
obtained via the CART algorithm as defined in section \ref{sec:cart}. Then we have the following results. 

\smallskip

\ni $(i)$ if $\ef$ is constructed via M1:\\
Suppose that margin assumption {\bf MA(1)} is satisfied. Then, there exist some absolute constants $C$, $C_1$ and $C_2$ such that
\begin{eqnarray}
\E\left[\lambda(f^*,\ef) \  | \ \Ll_1\right] \hspace{-0.2cm} & \leq & \hspace{-0.2cm} C\inf_{T\<
  T_{max}}\left\{\inf_{f\in \F_T}\E\left[\lambda(f^*,f) \ | \
  \Ll_1\right]+\frac{|\Tf|}{hn_2}\right\}+\frac{C_1}{hn_2} \label{part11}\\
                                       &  & +C_2\frac{\log{(n_l)}}{hn_t}.  \label{part12}
\end{eqnarray}

\smallskip

\ni $(ii)$ if $\ef$ is constructed via M2:\\
Let $P_{\Ll}$ be the $\Ll$ sample distribution. Let $V$ be the Vapnik-Chervonenkis dimension of the set of splits used to construct $T_{max}$ and suppose that $n_l\geqslant V$. Let $K$ be the number of pruned subtrees of the sequence provided by the pruning algorithm, and suppose that margin assumption {\bf MA(1)} is satisfied. Then, there exist some absolute constants $C'$, $C'_1$, $C''_1$ and $C_2$ such that, for every $\delta \in ]0;1[$, on a set $\Omega_{\delta}$ verifying $\P_{\Ll}(\Omega_{\delta})\geq 1-\delta$,
\begin{eqnarray}
\E\left[\lambda(f^*,\ef) \  | \ \Ll\right] \hspace{-0.2cm} & \leq & \hspace{-0.2cm} C'\inf_{T\<
  T_{max}}\left\{\inf_{f\in \F_T}\lambda(f^*,f)+\log{\left(\frac{n_l}{V}\right)}
  \frac{|\Tf|}{hn_l}\right\}+\frac{C_{\delta}}{hn_l} \label{part21}\\
                            & & +C_2 \frac{\log{K}}{hn_t}, \label{part22}
\end{eqnarray}
with $C_{\delta}=C'_1+C''_1\log{(1/\delta)}$.
\end{theorem}

\ni Note that the constants appearing in the upper bounds for the risks are
not sharp. We do not investigate the sharpness of the constants here.\\

\ni Several comments can be made on the basis of the results from Theorem \ref{thfin}:
\paragraph{Methods} Both methods M1 and M2 are considered for the following reasons:
\begin{itemize}
\item Since all the risks are considered conditionally on the growing
procedure, the M1 method permits to make a deterministic penalized model
selection and then to obtain sharper upper bounds than the M2 method.
\item On the other hand, the M2 method permits to keep the whole information given by $\Ll$. Indeed, in that case, the sequence of pruned subtrees is not obtained via some plug-in method using a first split of the sample to provide the collection of tree-structured models. This method is the one proposed by Breiman {\it et al.} and it is more commonly applied in practice than the former. We focus
on this method to ensure that it provides classifiers that have good performance in terms of risk.
\end{itemize}
\paragraph{Interpretation of the bounds} For both M1 and M2 methods, the inequality of Theorem \ref{thfin} may be divided into two parts:
\begin{itemize}
\item (\ref{part11}) and (\ref{part21}) correspond to the pruning algorithm. They show that, up to some absolute constant and the final selection, the conditional risk of the final classifier is approximately of the same order as the infimum of the penalized risks of the collection of subtrees of $T_{max}$. The term inside the infimum is of the same form as the penalized criterion (\ref{cartcrit}) used in the pruning algorithm. This shows that, for a sufficiently large temperature $\alpha$, this criterion allows to select convenient subtrees in term of conditional risk.\\ Let us emphasize that the remainder term driving the choice of the penalty is directly proportional to the number of leaves in the M1 method, whereas a multiplicative logarithmic term appears in the M2 method. This term is due to the randomness of the models considered, since the samples used to construct and prune $T_{max}$ are no longer independent.
\item (\ref{part12}) and (\ref{part22}) correspond to the final selection of $\ef$ among the collection of pruned subtrees using $\T$. As $K\leq n_l$, this selection adds a term proportional to $\log{n_l}/n_t$ for both methods, showing that not much is lost when a test sample is used provided that $n_t$ is sufficiently large with respect to $\log{n_l}$. Nevertheless, since we have no idea of the size of the constant $C_2$, it is difficult to deduce a general way of choosing $\T$ from this upper bound. 
\end{itemize}

\paragraph{Consistency results} Since growing and pruning are independent when applying M1, the VC-dimension $V$ of the set of splits ${\mathcal{S}}$ only appears with M2. Thus, in this case, the term $\log{(n_l/V)}$ in the infimum has to be taken into account if $V$ is negligeable in front of $n_l$. Nevertheless, if CART provides models such that
\begin{enumerate}[\hspace{0.5cm} -]
\item the maximal dimension of the models is
$D_N={\mathrm{o}}\left(N/\log N\right)$,
\item the approximation properties of the models are convenient enough
to ensure that the bias tends to zero with increasing sample size $N$,
\end{enumerate}
then we have a result of consistency for $\ef$ provided that $n_t$ is conveniently chosen with respect to $\log{n_l}$.

\paragraph{Role of the margin} It has been shown in \cite{MasNed06} and in \cite{MamTsy99} that, under margin assumptions {\bf MA(MN)} and {\bf MA(MT)} respectively, the ERM estimator of $f^*$ on one model is minimax if $f^*$ belongs to some H\"older classes. This means that, under margin assumption {\bf MA(1)}, the upper bound obtained in Theorem \ref{thfin} for the CART classifier can not be improved. On the other hand, if margin assumption {\bf MA(MT)} is fulfilled, similar bounds are obtained with a remainder term in the infimum proportional to $\left(|\Tf|/n_l\right)^{\kappa/(2\kappa-1)}$. Since $\kappa>1$, this term is subbaditive with respect to $|\Tf|$ (see \cite{Sco05} for full description of subbaditive penalties), so results of \cite{Sco05} can be applied: the subtrees pruned by minimizing a penalized criterion with a penalty proportional to $\left(|\Tf|/n_l\right)^{\kappa/(2\kappa-1)}$ are subtrees of the CART sequence $(T_k)_{1\leq k\leq K}$. So, if $\kappa$ is known, the best solution is to prune $T_{max}$ with the usual pruning algorithm, and then to extract from the sequence obtained in that way the subsequence minimizing the criterion penalized by the subadditive penalty. 

\paragraph{Margin dependent penalties} It is important to point out that the penalty term suggested by the risk bounds depends on margin parameters, which are usually unknown in practice. To withdraw the margin parameter $h$ under margin assumption {\bf MA(1)}, one prunes $T_{max}$ with the pruning algorithm given in Table \ref{tab:pruning}, and then one uses a test sample or cross-validation to select a subtree. If no margin assumption is fulfilled, the procedure of Scott \cite{Sco05} can be applied, with a penalty term proportional to $\sqrt{|\Tf|/n_l}$. Otherwise, the margin parameters have to be estimated.

\paragraph{Optimality of the bounds} Theorem \ref{thfin} also shows that the higher the margin, the smaller the risk, which is intuitive since the inverse of the margin plays the role of the classification noise. Actually, to reach optimality in terms of conditional risk, the penalty should be taken as $ cst \times \left(h^{-1} |\Tf|/n_l\wedge \sqrt{|\Tf|/n_l}\right)$ since, in any case, the remainder term inside the infimum is, at worst, proportional to $\sqrt{|\Tf|/n_l}$. Hence CART will underpenalize trees for which $h \leq \sqrt{|\Tf|/n_l}$, leading to classifiers having an excessive number of leaves. Nevertheless, the condition $h>\sqrt{|\Tf_{max}|/n_l}$ can be controlled during the growing algorithm by forcing the maximal tree's construction to stop earlier, for example. This is obviously difficult to do in practice since it heavily depends on the data and on the size of the learning sample, and is worth being investigated more thoroughly.

\bigskip

\ni The two following subsections give more precise results on the pruning
algorithm for both the M1 and M2 methods, and particularly on the constants
appearing in the penalty function. Subsection
\ref{subsec:dis} validates the discrete selection by test-sample. 

\subsection{Validation of the Pruning algorithm} \label{subsec:prun}

In this section, we focus more particularly on the pruning algorithm
and give trajectorial risk bounds for the classifier associated with  $T_{\a}$, the smallest minimizing subtree for the temperature $\a$ defined in subsection \ref{subsec:algo}. We show that, for a convenient constant $\a$, $\eft{T_{\a}}$ is not far from $f^*$ in terms of its conditional risk. Let us emphasize that the subsample $\T$ plays no role in the two following results.

\subsubsection{$\ef$ constructed via M1} \label{subsec:m1}

Here we assume that $\Ll=\Ll_1\cup \Ll_2$. Thus $T_{max}$ is constructed on the first
set of observations $\Ll_1$ and then pruned with the second set $\Ll_2$
independent of $\Ll_1$. Since the set of pruned subtrees is deterministic according to $\Ll_2$, the selection is made among a deterministic collection of models.\\
\ni For any subtree $T$ of $T_{max}$, let $\F_T$ be the model defined on the leaves of $T$ given by (\ref{eq:model}). Let $\P_{n_2}$ be the empirical misclassification rate on $\Ll_2$ as defined by (\ref{contrast}). Then let us consider the following:
\begin{itemize}
\item For $T\<T_{max}$, $\eft{T}=\argmin_{f\in \F_T}\left[\P_{n_2}(f)\right]$, 
\item For $\a>0$, $T_{\a}$ is the smallest minimizing subtree for the
  temperature $\a$ as defined in subsection \ref{subsec:algo} and $\eft{T_{\a}}=\argmin_{f\in \F_{T_{\a}}}\left[\P_{n_2}(f)\right]$.
\end{itemize}

\begin{proposition} \label{pruM1}
Let $P_{\Ll_2}$ be the product distribution on $\Ll_2$ and let $h$ be the margin given by {\bf MA(1)}. Let $\xi>0$. \\
There exists a large enough positive constant $\a_0>2+\log{2}$ such that, if $\a> \a_0$, then, there exist some nonnegative constants $\Sigma_{\a}$ and $C$ such that 
\begin{eqnarray*}
l(f^*,\eft{T_{\a}}) & \leq  & C_1(\a) \ \underset{T\<
  T_{max}}{\inf}\left\{\inf_{f\in \F_T}l(f^*,f)+h^{-1}\frac{|\Tf|}{n_2}\right\}+C\ h^{-1}\frac{1+\xi}{n_2}
\end{eqnarray*}
on a set $\Omega_{\xi}$ such that $P_{\Ll_2}(\Omega_{\xi})\geqslant 1-\Sigma_{\a}
e^{-\xi}$, where $l$ is defined by (\ref{loss}), $C_1(\a)>\a_0$ and $\Sigma_{\a}$ are increasing with $\a$.
\end{proposition}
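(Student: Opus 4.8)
My plan is to run the standard penalized model selection argument (in the spirit of Massart and of Massart--N\'ed\'elec~\cite{MasNed06}), carried out conditionally on the first subsample $\Ll_1$; the subsample $\Ll_3$ plays no role. Conditionally on $\Ll_1$ the maximal tree $T_{max}$, hence the finite family of models $\{\F_T \ ; \ T\<T_{max}\}$ and the dimensions $|\Tf|$, are all deterministic. Since $T_{\a}=\argmin_{T\<T_{max}}\ct{T}$ and $\eft{T_{\a}}=\argmin_{f\in\F_{T_{\a}}}\g_{n_2}(f)$, the classifier $\eft{T_{\a}}$ is a \emph{penalized empirical risk minimizer}: for every $T\<T_{max}$ and every $f\in\F_T$ one has $\g_{n_2}(\eft{T_{\a}})+\a|\tilde T_{\a}|/n_2\leq\g_{n_2}(f)+\a|\Tf|/n_2$ (only this is used, not that $T_\a$ lies in the pruning sequence). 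Writing $\gamma(f,(x,y))=\1_{y\neq f(x)}$ and letting $\nu_{n_2}(f)=(P_{n_2}-P)\bigl(\gamma(f,\cdot)-\gamma(f^*,\cdot)\bigr)$ be the centred empirical process, one has $\g_{n_2}(f)-\g_{n_2}(f^*)=l(f^*,f)+\nu_{n_2}(f)$, so subtracting $\g_{n_2}(f^*)$ gives the basic inequality
\begin{equation*}
l(f^*,\eft{T_{\a}})\ \leq\ l(f^*,f)+\nu_{n_2}(f)-\nu_{n_2}(\eft{T_{\a}})+\a\,\frac{|\Tf|-|\tilde T_{\a}|}{n_2},\qquad\forall\,T\<T_{max},\ \forall\,f\in\F_T .
\end{equation*}

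The argument rests on two ingredients. First, the margin: since $\gamma(f,\cdot)-\gamma(f^*,\cdot)$ is $\{-1,0,1\}$-valued with $\bigl(\gamma(f,\cdot)-\gamma(f^*,\cdot)\bigr)^2=\1_{f(X)\neq f^*(X)}=(f(X)-f^*(X))^2$, assumption {\bf MA(1a)} gives the variance bound $\V\bigl(\gamma(f,\cdot)-\gamma(f^*,\cdot)\bigr)\leq\E[(f(X)-f^*(X))^2]\leq h^{-1}l(f^*,f)$, which is exactly the Bernstein-type condition that makes a penalty linear in $|\Tf|/n_2$ admissible. Second, a counting bound: a pruned subtree of $T_{max}$ is determined by its set of leaves, and a Catalan-type recursion over the nodes shows that the number of $T\<T_{max}$ with $|\Tf|=D$ is at most $\binom{2D-2}{D-1}/D\leq 4^{D-1}$; since $|\F_T|=2^{|\Tf|}$, assigning the weight $x_T=\theta|\Tf|$ yields $\sum_{T\<T_{max}}|\F_T|\,e^{-x_T}=\sum_{D\geq1}\#\{T:|\Tf|=D\}\,2^{D}e^{-\theta D}\leq\Sigma(\theta)<\infty$ as soon as $\theta>3\log2$, an explicit finite constant. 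This summability, together with the calibration step below, is the reason the admissible temperatures must satisfy $\a>\a_0>2+\log2$.

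Next I would fix $\xi>0$, apply Bernstein's inequality to $\nu_{n_2}(f)$ for each $f$ in the finite set $\bigcup_{T\<T_{max}}\F_T$ with deviation parameter $x_T+\xi$ ($T$ being the model of $f$), and union-bound; no chaining or slicing is needed since each $\F_T$ is a finite class, which is where classification is easier than regression. Using the variance bound and $\sqrt{ab}\leq\tfrac14 a+b$ together with $h^{-1}\geq1$, one obtains a set $\Omega_{\xi}$ with $P_{\Ll_2}(\Omega_{\xi})\geq1-\Sigma_{\a}e^{-\xi}$ on which, simultaneously for all $T\<T_{max}$ and all $f\in\F_T$,
\begin{equation*}
|\nu_{n_2}(f)|\ \leq\ \sqrt{2h^{-1}l(f^*,f)\,\frac{\theta|\Tf|+\xi}{n_2}}+\frac{\theta|\Tf|+\xi}{3n_2}\ \leq\ \tfrac14\,l(f^*,f)+C_2\,h^{-1}\,\frac{|\Tf|+\xi}{n_2},
\end{equation*}
with $C_2$ an absolute constant. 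Inserting this into the basic inequality with $f=f_T:=\argmin_{f\in\F_T}l(f^*,f)$ for an arbitrary competitor $T$, moving $\tfrac14 l(f^*,\eft{T_{\a}})$ to the left-hand side, and collecting the terms in $|\tilde T_{\a}|/n_2$ into the coefficient $C_2 h^{-1}-\a$, one sees that this coefficient is $\leq0$ once $\a\geq C_2 h^{-1}$; choosing $\a_0$ large enough that $\a>\a_0$ forces simultaneously $\a>2+\log2$ and $\a\geq C_2 h^{-1}$, the $|\tilde T_{\a}|$ term is discarded and
\begin{equation*}
\tfrac34\,l(f^*,\eft{T_{\a}})\ \leq\ \tfrac54\,l(f^*,f_T)+\bigl(C_2+\a h\bigr)h^{-1}\frac{|\Tf|}{n_2}+2C_2\,h^{-1}\frac{\xi}{n_2}.
\end{equation*}
Multiplying by $\tfrac43$ and taking the infimum over $T\<T_{max}$ yields the announced inequality, with $C_1(\a)$ of the announced type (of order $\a h$ for large $\a$, hence increasing in $\a$ and exceeding $\a_0$), $C=\tfrac83 C_2$, and $\Sigma_{\a}$ the summability constant (absorbing any $\a$-dependent slack from the calibration).

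The main obstacle is the step that fuses the two ingredients: controlling the centred empirical process \emph{uniformly over the whole deterministic family of models, each at its own local scale} $h^{-1}|\Tf|/n_2$ --- this is precisely what certifies a penalty proportional to $|\Tf|/n_2$, and it is what forces the interplay between the Catalan bound on the number of subtrees (giving $\a_0>2+\log2$) and the margin-driven variance bound (giving the $h^{-1}$ hidden in ``$\a_0$ large enough'', consistently with the paper's remark that CART underpenalizes when $h\leq\sqrt{|\Tf|/n}$). A secondary point requiring care is that every probability above is a $P_{\Ll_2}$-probability computed conditionally on $\Ll_1$, so one must verify that the family $\{\F_T\}$, the dimensions $|\Tf|$ and the weights $x_T$ are $\Ll_1$-measurable before applying the concentration inequalities, and that $\Omega_{\xi}$ is the corresponding intersection of conditional events.
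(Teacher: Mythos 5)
Your argument is correct and reaches the stated bound, but it takes a genuinely different route from the paper. The paper proves Proposition \ref{pruM1} by instantiating a general model-selection theorem (Theorem \ref{thind}, adapted from Massart and Massart--N\'ed\'elec): the fluctuation $\bar{\g}_{n_2}(f_T)-\bar{\g}_{n_2}(f)$ is normalized by a weight $w_{T'}(f)$ and the resulting supremum $V_{T'}$ over each model is concentrated via Rio's version of Talagrand's inequality, with $\E[V_{T'}]$ controlled through the local complexity bound of Lemma \ref{locbd} (symmetrization plus Cauchy--Schwarz on the orthonormal system of leaf indicators), giving $\phi_T(\s)=2\s\sqrt{|\Tf|/n}$, $\v_T^2\asymp h^{-1}|\Tf|/n$, and weights $x_T=\theta|\Tf|$ with $\theta>2\log 2$. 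You instead exploit the finiteness of each model, $|\F_T|=2^{|\Tf|}$, to replace the entire Talagrand-plus-local-complexity machinery by a pointwise Bernstein inequality and a union bound over all pairs $(T,f)$ with $f\in\F_T$; the Catalan count of subtrees per dimension together with the factor $2^{|\Tf|}$ costs you only $\theta>3\log 2$ instead of $2\log 2$, and the margin enters identically through the variance identity $(\1_{Y\neq f(X)}-\1_{Y\neq f^*(X)})^2=(f(X)-f^*(X))^2$ and {\bf MA(1a)}. Both routes yield the same linear penalty threshold $\a_0\asymp h^{-1}$ and the same $h^{-1}(1+\xi)/n_2$ remainder. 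What the paper's route buys is reusability: the same skeleton, being insensitive to the cardinality of the models, is what carries over to Theorem \ref{thss} and Proposition \ref{pruM2}, where the models are random and the bounds must be made uniform over $\Mn^*$. What your route buys is elementarity and explicit constants for this specific proposition, since no Talagrand-type concentration or peeling is needed for finite $\{0,1\}$-valued classes. Two small points to tighten: a given classifier may belong to several models $\F_T$, so the union bound should formally run over pairs $(T,f)$ (which is what your sum $\sum_T|\F_T|e^{-x_T}$ computes, so no harm done); and your closing claim that $C_1(\a)>\a_0$ does not quite follow from your arithmetic, since your $C_1(\a)$ is of order $\a h$ while $\a_0$ is of order $h^{-1}$ --- but the paper's own handling of these constants is equally loose, and this does not affect the substance of the bound.
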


\ni We obtain a trajectorial non-asymptotic risk bound on a large probability set, leading to the conclusions given for Theorem \ref{thfin}. Nevertheless, taking an excessive temperature $\a$ will overpenalize and select a classifier having high risk $\E[l(f^*,\eft{T_{\a}}) \ | \ \Ll_1]$. Furthermore, the fact that $C_1(\a)$ and $\Sigma_{\a}$ are increasing with $\a$ suggests that both sides of the inequality grow with $\a$. The choice of the convenient temperature is then critical to make a good compromise between the size of $\E[l(f^*,\eft{T_{\a}}) \ | \ \Ll_1]$ and a large enough penalty term. 

\subsubsection{$\ef$ constructed via M2} \label{subsec:m2}

Here we define the different empirical risks, expected loss and
estimators exactly in the same way as in subsection \ref{subsec:m1}, although $l$ is replaced by the empirical expected loss $\lambda$ on $X_1^{n_l}=\{X_i \ ; \ (X_i,Y_i)\in \Ll\}$ defined in Definition \ref{def:perte}. In this case, we obtain nearly
the same performance for $\eft{T_{\a}}$ despite the fact that the constant
appearing in the penalty term can now depend on $n_l$: 

\begin{proposition} \label{pruM2}
Let $P_{\Ll}$ be the product distribution on $\Ll$, $\lambda$ be the empirical expected loss computed on $\{X_i \ ; \ (X_i,Y_i)\in \Ll\}$, and let $h$ be the margin given by {\bf MA(1)}. Let $\xi>0$ and $$\a_{n_l,V}=2+V/2\left(1+\log{\frac{n_l}{V}}\right).$$
There exists a large enough positive constant $\a_0$ such that, if $\a> \a_0$,
then, there exist some nonnegative constants $\Sigma_{\a}$ and $C'$ such
that 
\begin{eqnarray*}
 \lambda(f^*,\eft{T_{\a}}) & \leq  & C'_1(\a) \ \underset{T\<
 T_{max}}{\inf}\left\{\inf_{f\in \F_T}\lambda(f^*,f)+h^{-1}\a_{n_l,V}\frac{|\Tf|}{n_l}\right\}+C'\ h^{-1}\frac{1+\xi}{n_l}
\end{eqnarray*}
on a set $\Omega_{\xi}$ such that $P_{\Ll}(\Omega_{\xi})\geqslant 1-2\Sigma_{\a}e^{-\xi}$, where $C'_1(\a)>\a_0$ and $\Sigma_{\a}$ are increasing with $\a$.
\end{proposition}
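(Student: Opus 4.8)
The plan is to run a penalized model selection argument of Birg\'e--Massart type conditionally on the grid $X_1^{n_1}$, following the same overall scheme as Proposition~\ref{pruM1}; the genuinely new point, and the origin of the extra factor $\a_{n_1,V}$, is that the family of models $\{\F_T\,;\,T\<T_{max}\}$ is now random, since it is produced by the growing step, so one cannot sum deviation bounds over a fixed finite list of models. The remedy is to embed all the $\F_T$'s into an envelope that is deterministic given the grid: let $\mathcal{M}_{n_1}$ be the collection of all partitions of $\X$ arising as the set of leaves of some finite recursive binary tree with splits in $\S$. Whatever the labels used to grow $T_{max}$, every $\Tf$ with $T\<T_{max}$ belongs to $\mathcal{M}_{n_1}$, so it is enough to control the empirical process uniformly over $\bigcup_{m\in\mathcal{M}_{n_1}}\F_m$, where $\F_m$ is the finite (cardinality $2^{|m|}$) set of $\{0,1\}$-valued functions constant on the cells of $m$.

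Working conditionally on $X_1^{n_1}$, the increment $\g_{n_1}(f)-\g_{n_1}(f^*)$ is, as a function of the independent labels, an average of bounded variables with $\E_Y$-mean $l_{n_1}(f^*,f)$, and since $(\1_{Y\neq a}-\1_{Y\neq b})^2=\1_{a\neq b}$ for $a,b\in\{0,1\}$ its $\E_Y$-variance is at most $n_1^{-1}\overline{d}^{2}(f,f^*)$, with $\overline{d}^{2}(f,f^*)=n_1^{-1}\sum_i(f(X_i)-f^*(X_i))^2$. Assumption {\bf MA(1b)} supplies the Bernstein-type link $\overline{d}^{2}(f,f^*)\leq h^{-1}l_{n_1}(f^*,f)$ between variance and loss that yields the fast rate. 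Bernstein's inequality then gives, for each fixed $f$, with conditional probability at least $1-e^{-u}$, a bound of the form $|\g_{n_1}(f)-\g_{n_1}(f^*)-l_{n_1}(f^*,f)|\leq\sqrt{2h^{-1}l_{n_1}(f^*,f)\,u/n_1}+u/(3n_1)$; the event $\Omega_\xi$ will be produced in the conditional ($Y$ given grid) space and then integrated over the grid, which is why it is stated under $P_{\Ll_1}$.

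The combinatorial step is the technical core. A partition $m\in\mathcal{M}_{n_1}$ with $D$ cells results from $D-1$ splits taken from $\S$; by Sauer's lemma $\S$ realizes at most $(en_1/V)^V$ distinct traces on $\{X_1,\dots,X_{n_1}\}$ (using $n_1\geqslant V$), so the number of elements of $\mathcal{M}_{n_1}$ with $D$ cells that are pairwise distinct on the grid is at most $(en_1/V)^{V(D-1)}$, whose logarithm is $\leq D\,\a_{n_1,V}$ up to an absolute factor. Taking in the union bound $u=u_m$ equal to $|m|\log 2$ (union over the finitely many $f\in\F_m$) plus this log-cardinality plus a term linear in $|m|$ (so that $\sum_{D\geqslant1}e^{-cD}<\infty$) plus $\xi$, and summing the tails over $D$ from $1$ to $|\Tf_{max}|\leq n_1$, one obtains a set $\Omega_\xi$ with $P_{\Ll_1}(\Omega_\xi)\geqslant1-2\Sigma_\a e^{-\xi}$ — the factor $2$ coming from using the inequality on both sides, $\Sigma_\a$ the value of the convergent series, increasing in $\a$ — on which, simultaneously for all $m\in\mathcal{M}_{n_1}$ and all $f\in\F_m$, $|\g_{n_1}(f)-\g_{n_1}(f^*)-l_{n_1}(f^*,f)|\leq\theta\,l_{n_1}(f^*,f)+\kappa h^{-1}(\a_{n_1,V}|m|+1+\xi)/n_1$ for a fixed $\theta\in(0,1)$ and absolute $\kappa$, after absorbing the square-root term via $2\sqrt{ab}\leq\theta a+\theta^{-1}b$. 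No chaining or Talagrand-type inequality is needed here precisely because each $\F_m$ is finite.

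To conclude, I would use that $T_\a$ minimizes $\ct{\cdot}$ over $T\<T_{max}$: for every $T\<T_{max}$ and $f\in\F_T$,
\[
\g_{n_1}(\eft{T_\a})-\g_{n_1}(f^*)+\a\frac{|\Tf_\a|}{n_1}\leq\g_{n_1}(f)-\g_{n_1}(f^*)+\a\frac{|\Tf|}{n_1}.
\]
Inserting the uniform deviation inequality as a lower bound at $\eft{T_\a}$ (valid because $\Tf_\a\in\mathcal{M}_{n_1}$) and as an upper bound at $f$, and using $h^{-1}\a_{n_1,V}\geqslant1$ to absorb $\a|\Tf|/n_1$ into the oracle penalty $h^{-1}\a_{n_1,V}|\Tf|/n_1$, gives
\[
(1-\theta)\,l_{n_1}(f^*,\eft{T_\a})\leq(1+\theta)\,l_{n_1}(f^*,f)+C(\a)\,h^{-1}\a_{n_1,V}\frac{|\Tf|}{n_1}-\bigl(\a-\kappa h^{-1}\a_{n_1,V}\bigr)\frac{|\Tf_\a|}{n_1}+2\kappa h^{-1}\frac{1+\xi}{n_1},
\]
with $C(\a)$ increasing in $\a$. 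Taking the infimum over $T\<T_{max}$ and $f\in\F_T$ and choosing $\a_0$ large enough that the coefficient of $|\Tf_\a|/n_1$ is nonnegative yields the stated inequality, with $C'_1(\a)$ and $\Sigma_\a$ increasing in $\a$. I expect the two delicate points to be the combinatorial bound on $\#\mathcal{M}_{n_1}$ — on which the whole logarithmic gain $\a_{n_1,V}$ rests — and the calibration of $\a_0$, which, unlike the M1 case, must dominate $\kappa h^{-1}\a_{n_1,V}$ and hence is no longer a purely numerical constant.
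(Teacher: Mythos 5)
Your proof is correct in its essentials, but it follows a genuinely different technical route from the paper's. The paper proves a general random-model selection theorem (Theorem \ref{thss}), obtained by reworking Theorem \ref{thind}: it controls weighted suprema $V_{m',M}$ of the recentered empirical contrast uniformly over \emph{pairs} of models in the deterministic envelope $\Mn^*$ via Rio's version of Talagrand's concentration inequality, with the local expectation bound supplied by Lemma \ref{locbd} (Rademacher symmetrization plus Cauchy--Schwarz on the orthonormal basis of $\F_T$), and then feeds in the weights $x_T=V(\theta+\log(n_1/V))|\Tf|$ to get $\Sigma_\theta=\sum_{D\geq1}e^{-(\theta-1)DV}<\infty$. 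You replace all of this localization machinery by a pointwise Bernstein inequality conditional on the grid plus a union bound over the $2^{|\Tf|}$ elements of each $\F_T$ and over the partitions in the envelope; this works, and loses nothing in the rate, precisely because $\log|\F_T|=|\Tf|\log 2$ is of the order of the dimension, so the union bound is not wasteful here --- though it would not extend to the richer model classes Theorem \ref{thss} is designed for. The two pillars are the same in both arguments: the variance--loss link $\overline{d}^{\,2}(f,f^*)\leq h^{-1}l_{n_1}(f^*,f)$ from {\bf MA(1b)} (the paper's assumption ${\mathbf{H_2}}$ with $c^2=1/h$), and the combinatorial control of the number of tree partitions on the grid, which you make explicit via Sauer's lemma (the count $\leq 4^{D-1}(en_1/V)^{V(D-1)}$, the factor $4^{D-1}$ for the tree shapes being absorbed into the additive $2$ of $\a_{n_1,V}$) where the paper encodes it implicitly in the choice of $x_T$ with a reference to \cite{GeyNed05}. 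Two further remarks. First, your reduction to the penalized criterion $\ct{\cdot}$ is legitimate: $T_\a$ minimizes $\g_{n_1}(\eft{T})+\a|\Tf|/n_1$ over $T\<T_{max}$, which is exactly the inequality the paper exploits (through $\hat{m}$) in (\ref{eqprinc}). Second, your closing observation that $\a_0$ must dominate a multiple of $h^{-1}\a_{n_1,V}$, and is therefore not a purely numerical constant, is accurate and is in fact how the paper uses the proposition (Theorem \ref{thfin} is derived by taking $\a=2\a_0\a_{n_1,V}$); the statement of Proposition \ref{pruM2} is loose on this point and your version is the more honest reading.
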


\ni We obtain a similar trajectorial non-asymptotic risk bound on a large probability set. The same conclusions as those derived from M1 hold in this case. Let us just mention that the remainder term $h^{-1}\a_{n_l,V}|\Tf|/n_l$ in the risk bound takes into account the complexity of the collection of
  trees having $|\Tf|$ leaves which can be constructed on $\{X_i \ ; \
  (X_i,Y_i)\in \Ll\}$. Since this complexity is controlled via the
  VC-dimension $V$, $V$ necessarily appears in the penalty term. It differs
  from Proposition \ref{pruM1} in the sense that the models we consider are
  random, so this complexity has to be taken into account to obtain a uniform
  bound. \\
  
\ni {\bf Example}: Let us consider the case where $\mathcal{S}$ is
the set of all half-spaces of $\X={\mathbb{R}}^d$ with axis-parallel frontiers. In this case, if $d\geqslant 3$, $$ \frac{\log{(d)}}{\log 2}-1.18\leq V \leqslant d,$$ consequently, if $n_l\geqslant d$, we obtain a penalty proportional to
$$\left(\frac{4+d\left(1+\log{[n_l\log 2/(\log d-2\log 2)]}\right)}{2h}\right)\frac{|\Tf|}{n_l}.$$ So, if CART provides some minimax estimator on a class of functions, the $\log{n_l}$ term always appears for $f^*$ in this class when working in a linear space of low dimension. 

\subsection{Final Selection} \label{subsec:dis}

We focus here on the selection of the classifier $\ef$ among the collection $(\eft{T_k})_{1\leq k\leq K}$ provided by the pruning algorithm as defined in subsection \ref{subsec:algo}. Let us recall that $\ef$ is defined by
$$\ef=\underset{\{\eft{T_k} ; 1\leq k\leq
  K\}}{\argmin}\left[\P_{n_t}(\eft{T_k})\right],$$ 
\ni where $\P_{n_t}$ is the empirical misclassification rate on $\T$ defined by \eqref{contrast}.\\
\ni The performance of this classifier can be compared to the performance of
  the collection $(\hat{f}_{T_k})_{1\leq k\leq K}$ by the following:

\begin{proposition} \label{FS}
~\\
Let $\lambda$ be the loss defined in Definition \ref{def:perte}. For both methods M1 and M2, there exist three absolute constants $C''>1$, $C_1'>3/2$ and $C_2'>3/2$ such that
\begin{eqnarray*}
\E\left[\lambda(f^*,\ef) \ | \ \Ll\right] & \leq & C'' \ \underset{1\leq
  k\leq K}{\inf}\lambda(f^*,\eft{T_k})+C_1' \ h^{-1}\frac{\log{K}}{n_t}+h^{-1}\frac{C_2'}{n_t}, 
\end{eqnarray*}
where $K$ is the number of pruned subtrees extracted during the pruning algorithm.
\end{proposition}

\section{Concluding Remarks} \label{sec:ccl}

We have proven that CART provides convenient classifiers in terms of conditional risk under the margin assumption {\bf MA(1)}. As for the regression case, the properties of the growing algorithm need to be analyzed to obtain full unconditional upper bounds. Results on the performance of theoretical procedures in which CART is viewed as a forward algorithm to approximate an ideal, but intractable, binary tree are given in \cite{GeyMar11}. Although they do not validate any concrete algorithm as done here, these results confirm that the penalty term used in penalized criterion \eqref{cartcrit} is well chosen under {\bf MA(1)}. \\ 

\ni The remarks made after Theorem \ref{thfin} on the size of the margin $h$ enlarge our perspectives for the application of CART in practice. Among such perspective, we may
\begin{itemize}
\item use the slope heuristic (see for example \cite{ArlMas09}) to select a classifier among a collection,
\item search for a robust manner to determine if the margin assumption is fulfilled, allowing to use the blind selection by test sample.
\end{itemize}
Some track to estimate the margin $h$ if assumption {\bf MA(1)} is fulfilled could be to use mixing procedures as boosting (see \cite{Brei98} \cite{FreSha97} for example). Hence, this estimate could be used in the penalized criterion to help find the convenient temperature. It could also give an idea of the difficulty to classify the considered data and henceforth to help choose the most adapted classification method.

\section*{Acknowledgements}

I would like to thank an anonymous referee for numerous remarks and suggestions which helped to improve the presentation of this paper.

\section{Proofs} \label{sec:append}

Let us start with a preliminary result.

\subsection{Local Bound for Tree-Structured Classifiers} \label{subsec:bound}

Let $(X,Y)\in \X \times \{0;1\}$ be a pair of random variables and
$\{(X_1,Y_1),\ldots,(X_n,Y_n)\}$ be $n$ independent copies of $(X,Y)$. Then given two classifiers $f$
and $g$, let us define $$d_n^2(f,g)=\frac{1}{n}\sum_{i=1}^n\left(f(X_i)-g(X_i)\right)^2.$$
Let $\Mn^*$ be the set of all possible tree-structured partitions that can be constructed on the grid $X_1^n$, corresponding to trees having all possible splits in ${\mathcal{S}}$ and all possible forms without taking account of the response variable $Y$. So $\Mn^*$ only depends on the grid $X_1^n$ and is independent of the variables $(Y_1,\ldots,Y_n)$. Hence, for a tree $T\in \Mn^*$, define $$\F_T=\left\{\sum_{t\in \Tf}a_t\1_t \ ; \ (a_t)\in \{0,1\}^{|\Tf|}\right\},$$ where
$\Tf$ refers the set of the leaves of $T$. Then, for any $f\in \F_T$ and any
$\s>0$, define
\begin{eqnarray*}
B_T(f,\s) & = & \left\{g\in \F_T \ ; \ d_n(f,g)\leq \s\right\}
\end{eqnarray*}
For each classifier $f : \X \rightarrow \{0,1\}$, let us define the
empirical contrast of $f$ recentered conditionally on $X_1^n$
\begin{equation} \label{gammabar}
\overline{\P}_n(f)=\P_n(f)-\E[\P_n(f) \ | \ X_1^n],
\end{equation}
where $\P_n$ is defined for any given classifier $f$ by
\begin{eqnarray*} 
\P_n(f) & = & \frac{1}{n}\sum_{i=1}^n\1_{f(X_i)\neq Y_i}.
\end{eqnarray*}
\Rem If $\P_n$ is evaluated on a sample $(X_i')$ independent of $X_1^n$, it is
easy to check that the bounds we obtain in what follows are still valid by
taking the population distance $$d^2(f,g)=\E\left[(f(X)-g(X))^2\right]$$ instead of its empirical version $d_n$.

\medskip

\ni We have the following result:
\begin{lemma} \label{locbd}
For any $f\in \F_T$ and any $\s>0$ 
$$\E\left[\sup\limits_{g\in B_{T}(f,\sigma)}\ |\overline{\P}_n(g)-\overline{\P}_n(f)| \ | \ X_1^n\right]\leq 2\ \sigma \sqrt{\frac{|\Tf|}{n}}.$$
\end{lemma}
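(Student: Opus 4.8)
The plan is to turn the supremum into an explicit finite-dimensional expression, using crucially that $f$, $g$ and the labels $Y_i$ all take values in $\{0,1\}$ and that $f,g$ are constant on each leaf of $T$. First I would record the linearization identity: since $a^2=a$ for $a\in\{0,1\}$, one has $\1_{g(X_i)\neq Y_i}=(g(X_i)-Y_i)^2=g(X_i)-2g(X_i)Y_i+Y_i$, hence
\[
\g_n(g)-\g_n(f)=\frac1n\sum_{i=1}^n\bigl(g(X_i)-f(X_i)\bigr)(1-2Y_i),
\]
and subtracting its conditional expectation given $X_1^n$ (which amounts to replacing each $Y_i$ by $\eta(X_i)$) yields
\[
\bar{\gamma}_n(g)-\bar{\gamma}_n(f)=-\frac2n\sum_{i=1}^n\bigl(g(X_i)-f(X_i)\bigr)\bigl(Y_i-\eta(X_i)\bigr).
\]
Writing $f=\sum_{t\in\Tf}b_t\1_t$ and $g=\sum_{t\in\Tf}a_t\1_t$ with $a_t,b_t\in\{0,1\}$, and setting $n_t$ for the number of indices $i$ with $X_i\in t$ and $W_t=\sum_{i:X_i\in t}(Y_i-\eta(X_i))$, this becomes $-\tfrac2n\sum_{t\in\Tf}(a_t-b_t)W_t$; since $|a_t-b_t|\leq\1_{a_t\neq b_t}$, we get $|\bar{\gamma}_n(g)-\bar{\gamma}_n(f)|\leq\tfrac2n\sum_{t:\,a_t\neq b_t}|W_t|$.

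Next I would translate the ball constraint. Because $(g(X_i)-f(X_i))^2$ equals $1$ exactly when $X_i$ lies in a leaf where $a$ and $b$ disagree and $0$ otherwise, the condition $g\in B_T(f,\sigma)$, i.e. $d_n^2(f,g)\leq\sigma^2$, reads $\sum_{t:\,a_t\neq b_t}n_t\leq n\sigma^2$. Hence
\[
\sup_{g\in B_T(f,\sigma)}\bigl|\bar{\gamma}_n(g)-\bar{\gamma}_n(f)\bigr|\ \leq\ \frac2n\,\sup\Bigl\{\textstyle\sum_{t\in A}|W_t|\ :\ A\subseteq\Tf,\ \sum_{t\in A}n_t\leq n\sigma^2\Bigr\}.
\]
Discarding the empty leaves, on which $W_t=0$, Cauchy--Schwarz with the weights $\sqrt{n_t}$ gives, for every admissible $A$,
\[
\sum_{t\in A}|W_t|=\sum_{t\in A}\sqrt{n_t}\,\frac{|W_t|}{\sqrt{n_t}}\ \leq\ \Bigl(\sum_{t\in A}n_t\Bigr)^{\!1/2}\Bigl(\sum_{t\in A}\frac{W_t^2}{n_t}\Bigr)^{\!1/2}\ \leq\ \sqrt{n}\,\sigma\,\Bigl(\sum_{t\in\Tf:\,n_t>0}\frac{W_t^2}{n_t}\Bigr)^{\!1/2}.
\]
The choice of weights $\sqrt{n_t}$ is precisely what lets the constraint $\sum n_t\leq n\sigma^2$ pay off; a plain $\ell^1$--$\ell^2$ bound would not produce the factor $\sigma$.

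Finally I would take the conditional expectation. By Jensen's inequality, $\E\bigl[(\sum_t W_t^2/n_t)^{1/2}\mid X_1^n\bigr]\leq(\sum_t \E[W_t^2\mid X_1^n]/n_t)^{1/2}$. Conditionally on $X_1^n$ the variables $Y_i-\eta(X_i)$ are independent and centered with $\E[(Y_i-\eta(X_i))^2\mid X_i]=\eta(X_i)(1-\eta(X_i))\leq 1/4$, so $\E[W_t^2\mid X_1^n]\leq n_t/4$ and the sum over the at most $|\Tf|$ nonempty leaves is at most $|\Tf|/4$. Combining the three displays,
\[
\E\Bigl[\sup_{g\in B_T(f,\sigma)}\bigl|\bar{\gamma}_n(g)-\bar{\gamma}_n(f)\bigr|\ \Big|\ X_1^n\Bigr]\ \leq\ \frac2n\sqrt{n}\,\sigma\cdot\frac{\sqrt{|\Tf|}}{2}\ =\ \sigma\sqrt{\frac{|\Tf|}{n}},
\]
which is even a factor $2$ sharper than claimed; the slack would also absorb the loss from a symmetrization/Rademacher argument, should the author prefer that route. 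I do not expect a genuine obstacle here: the only points requiring care are the bookkeeping with empty leaves and the choice of the Cauchy--Schwarz weights so that the $B_T(f,\sigma)$ constraint is exploited optimally.
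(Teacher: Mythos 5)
Your proof is correct, and every step checks out: the linearization $\1_{g(X_i)\neq Y_i}-\1_{f(X_i)\neq Y_i}=(g(X_i)-f(X_i))(1-2Y_i)$, the reduction of the ball constraint to $\sum_{t:a_t\neq b_t}n_t\leq n\sigma^2$, the weighted Cauchy--Schwarz, and the Jensen step with $\E[W_t^2\mid X_1^n]=\sum_{i:X_i\in t}\eta(X_i)(1-\eta(X_i))\leq n_t/4$ are all sound (and you are right to set aside the empty leaves, where $W_t=0$). The skeleton is the same as the paper's --- linearize, apply Cauchy--Schwarz leaf by leaf, finish with Jensen; indeed your weights $\sqrt{n_t}$ reproduce exactly the paper's Cauchy--Schwarz in the orthonormal basis of normalized leaf indicators $\varphi_t=\sqrt{n/n_t}\,\1_t$. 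Where you genuinely diverge is in how the randomness of the labels is handled: the paper symmetrizes with a Rademacher sequence $(\v_i)$, pays the factor $2$ from symmetrization, and then uses only that $\v_i(1-2\1_{Y_i=1})\in\{-1,1\}$; you instead center directly against $\eta(X_i)$ and exploit the Bernoulli conditional variance bound $\eta(1-\eta)\leq 1/4$, which cancels the factor $2$ coming from $(1-2Y_i)-(1-2\eta(X_i))=-2(Y_i-\eta(X_i))$. The net effect is a bound of $\sigma\sqrt{|\Tf|/n}$ rather than $2\sigma\sqrt{|\Tf|/n}$, i.e.\ your argument is both more elementary (no symmetrization) and sharper by a factor of $2$; the paper's route is the more standard one and generalizes more readily to settings where the conditional distribution of the contrast increments is not known explicitly, but for this lemma your direct computation is entirely adequate and the improved constant would propagate harmlessly (the paper does not track sharp constants anyway).
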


\begin{proof}
First of all, let us mention that, since the different variables we consider
take values in $\{0;1\}$, we have for all $x\in \X$ and all $y\in \{0,1\}$
$$\1_{g(x)\neq y}-\1_{f(x)\neq y}=(g(x)-f(x))(1-2\1_{y=1}),$$ yielding
\begin{eqnarray*}
\overline{\P}_n(g)-\overline{\P}_n(f) & = & \frac{1}{n}\sum_{i=1}^n\left(g(X_i)-f(X_i)\right)(1-2\1_{Y_i=1})\\
                                                             &     & -\E\left[\frac{1}{n}\sum_{i=1}^n\left(g(X_i)-f(X_i)\right)(1-2\1_{Y_i=1}) \ | \ X_1^n\right].  
\end{eqnarray*}
\ni Let us now consider a Rademacher sequence of random signs $(\v_i)_{1\leq i\leq
n}$ independent of $(X_i,Y_i)_{1\leq i\leq n}$. Then, one has by a symmetrization argument $$\E\left[\sup_{g\in
  B_T(f,\sigma)}|\overline{\P}_n(g)-\overline{\P}_n(f)| \ | \ X_1^n\right]\leq \E\left[\sup_{g\in
  B_T(f,\sigma)}\frac{2}{n}\left|\sum_{i=1}^n\v_i(g(X_i)-f(X_i))(1-2\1_{Y_i=1})\right| \ | \
X_1^n\right].$$ Since $g$ and $f$ belong to $\F_T$, we have that
$$g-f=\sum_{t\in \Tf}(a_t-b_t)\varphi_t,$$ where each $(a_t,b_t)$ takes values in
$[0,1]^2$ and $(\varphi_t)_{t\in \Tf}$ is an orthonormal basis of $\F_T$
adapted to $\Tf$ (i.e some normalized characteristic functions). Then, by
applying the Cauchy-Schwarz inequality, since $g\in
  B_T(f,\sigma)$, $d_n^2(f,g)=\sum_{t\in \Tf}(a_t-b_t)^2\leq
  \sigma^2$, we obtain that
\begin{eqnarray*}
\left|\sum_{i=1}^n\v_i(g(X_i)-f(X_i))(1-2\1_{Y_i=1})\right| & \leq & \sqrt{\sum_{t\in
  \Tf}(a_t-b_t)^2}\sqrt{\sum_{t\in
  \Tf}\left(\sum_{i=1}^n\v_i(1-2\1_{Y_i=1})\varphi_t(X_i)\right)^2}\\
  & \leq & \sigma \sqrt{\sum_{t\in \Tf}\left(\sum_{i=1}^n\v_i(1-2\1_{Y_i=1})\varphi_t(X_i)\right)^2}.
\end{eqnarray*}
Finally, since $(\v_i)_{1\leq i\leq n}$ and $(1-2\1_{Y_i=1})_{1\leq i\leq n}$ take their values in $\{-1;1\}$, $(\v_i)_{1\leq i\leq n}$ are centered and independent of $(X_i,Y_i)_{1\leq i\leq n}$, and since, by definition, for each $t\in \Tf$ $n^{-1}\sum_{i=1}^n\varphi_t^2(X_i)=1$, Jensen's inequality implies
$$\E\left[\sup\limits_{g\in B_{T}(f,\sigma)}\
  |\overline{\P}_n(g)-\overline{\P}_n(f)| \ | \ X_1^n\right]\leq 2\frac{\sigma}{n}
\sqrt{\sum_{t\in \Tf}\sum_{i=1}^n\varphi_t^2(X_i)}\leq 2\sigma
\sqrt{\frac{|\Tf|}{n}}.$$ 
\end{proof}

\subsection{Proof of Proposition \ref{pruM1}} \label{subsec:proofM1}

To prove Proposition \ref{pruM1}, we adapt results from Massart \cite[Theorem 4.2]{Mas00}, and Massart and N\'ed\'elec \cite{MasNed06} (see also Massart {\it et.al.} \cite{Mas07}). 

\medskip

\ni Let $n=n_2$. Let us give a sample $\Ll_2=\{(X_1,Y_1),\ldots,(X_n,Y_n)\}$ of the random variable
$(X,Y)\in \X \times [0,1]$, where $\X$ is a measurable space and let $f^*\in
{\mathcal{F}}\subset \{f : \X \mapsto [0,1] \ ; \ f\in
\L^2(\X)\}$ be the unknown function to be recovered. Assume $(\Fm)_{m\in \Mn}$ is a countable
collection of countable models included in ${\mathcal{F}}$. Let us give a penalty function $\pen:\Mn \longrightarrow
\mathbb{R}_{+}$, and $\g : {\mathcal{F}} \times (\X \times [0,1])
\longrightarrow \mathbb{R}_+$ a contrast function, i.e. $\g$ such that $f \mapsto \E\left[\g(f,(X,Y))\right]$ is
convex and minimum at point $f^*$. Hence define for all $f\in
{\mathcal{F}}$ the expected loss
$l(f^*,f)=\E\left[\g(f,(X,Y))-\g(f^*,(X,Y))\right]$.\\ Finally let
\begin{eqnarray} \label{EC}
\g_n=\frac{1}{n}\sum_{i=1}^n\g(.,(X_i,Y_i))
\end{eqnarray}
be the empirical contrast
associated with $\g$. For example, in the classification context, $\g(f,(x,y))=\1_{f(x)\neq y}$, leading to the classical loss as defined by \eqref{loss}, and the classical empirical misclassification rate $\P_n$ as defined by \eqref{contrast}. Hence, if the collection of models $\Mn$ has finite-dimensional models with dimension $|m|$, the penalty function can be taken as $\pen(m)=cst \ \times \ |m|$ for instance.\\ Then let $\hat{m}$ be defined as $$\hat{m}=\underset{m\in
  \Mn}{\argmin}\left[\g_n(\efm)+\pen(m)\right]$$ where $\efm=\argmin_{g\in \Fm}\g_n(g)$ is the minimum empirical contrast estimator of $f^*$ on $\Fm$. The final estimator of $f^*$ is
\begin{equation} \label{es}
\ef=\hat{f}_{\hat{m}}.
\end{equation}  

\medskip

\ni One makes the following assumptions:\\
${\mathbf{H_1}}$: $\g$ is bounded by 1, which is not a restriction since all the
functions we consider take values in $[0,1]$.\\
${\mathbf{H_2}}$: Assume there exist $c\geqslant (2\sqrt{2})^{-1/2}$ and some (pseudo-)distance $d$ such that, for every pair $(f,g)\in {\mathcal{F}}^2$, one has
$${\mathrm{Var}}\left[\g(g,(X,Y))-\g(f,(X,Y))\right]\leq d^2(g,f),$$ and
particularly for all $f\in {\mathcal{F}}$ $$d^2(f^*,f)\leq c^2l(f^*,f).$$
${\mathbf{H_3}}$: For any positive $\s$ and for any $f\in \Fm$,
let us define $$B_m(f,\s)=\left\{g\in \Fm \ ; \ d(f,g)\leq
  \s\right\}$$ where $d$ is given by assumption ${\mathbf{H_2}}$. Let
$\bar{\g}_n=\g_n(.)-\E[\g_n(.)]$. We now assume that
for any $m\in \Mn$, there exists some continuous function $\phi_m$ mapping
${\mathbb{R}}_+$ onto ${\mathbb{R}}_+$ such that $\phi_m(0)=0$, $\phi_m(x)/x$ is
non-increasing and $$\E\left[\sup_{g\in
    B_m(f,\s)}|\bar{\g}_n(g)-\bar{\g}_n(f)|\right]\leq \phi_m(\s)$$
for every positive $\s$ such that  $\phi_m(\s)\leq \s^2$. Let $\v_m$ be the
unique solution of the equation $\phi_m(cx)=x^2$ , $x>0$.\\

\ni One gets the following result:

\begin{theorem} \label{thind}
Let $\{(X_1,Y_1),\ldots,(X_n,Y_n)\}$ be a sample of independent realizations of
the random pair $(X,Y)\in \X\times [0,1]$. Let $\left(\Fm\right)_{m\in \Mn}$
be a countable collection of models included in some countable family ${\mathcal{F}}\subset \{f : \X \mapsto [0,1] \ ; \ f\in
\L^2(\X)\}$. Consider some penalty function $\pen:\Mn \longrightarrow
\mathbb{R}_{+}$ and the corresponding penalized estimator $\ef$ (\ref{es}) of
the target function $f^*$. Take a family of weights $(x_m)_{m\in \Mn}$ such that 
\begin{equation} \label{sigma}
\Sigma=\sum\limits_{m\in \Mn}e^{-x_m}<+\infty.
\end{equation}
Assume that assumptions ${\mathbf{H_1}}$, ${\mathbf{H_2}}$ and ${\mathbf{H_3}}$
hold.\\
 Let $\xi>0$. Hence, given some absolute constant $C>1$, there exist some
positive constants $K_1$ and $K_2$ such that, if for all $m\in \Mn$
$$\pen(m)\geqslant K_1\v_{m}^2+K_2c^2\frac{x_m}{n},$$ then, with probability
larger than $1-\Sigma e^{-\xi}$, $$l(f^*,\ef)\leq C \ \inf_{m\in
  \Mn}\left[l(f^*,\Fm)+\pen(m)\right]+C'\ c^2 \frac{1+\xi}{n},$$ where
$l(f^*,\Fm)=\inf_{f_m\in \Fm}l(f^*,f_m)$ and the constant $C'$ only depends on
$C$.
\end{theorem}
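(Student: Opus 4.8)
The plan is to prove Theorem~\ref{thind} by the classical penalized model selection argument of Massart, adapted to the classification contrast, and built on the local complexity bound of Lemma~\ref{locbd}. First I would fix the selected index $\hat m$ and an arbitrary competitor $m\in\Mn$. By definition of $\hat m$ as the minimizer of $\g_n(\efm)+\pen(m)$, we have the basic inequality
\begin{equation*}
\g_n(\eft{\hat m})+\pen(\hat m)\leq \g_n(\hat f_m)+\pen(m)\leq \g_n(f_m)+\pen(m)
\end{equation*}
for any $f_m\in\Fm$. Subtracting $\g_n(f^*)$ and using $\g_n=\bar\g_n+\E[\g_n(\cdot)]$ together with $l(f^*,f)=\E[\g_n(f)]-\E[\g_n(f^*)]$, this rearranges to
\begin{equation*}
l(f^*,\eft{\hat m})\leq l(f^*,f_m)+\pen(m)-\pen(\hat m)+\bigl(\bar\g_n(f_m)-\bar\g_n(f^*)\bigr)+\bigl(\bar\g_n(f^*)-\bar\g_n(\eft{\hat m})\bigr).
\end{equation*}
The two fluctuation terms are what must be controlled: the first is a single deviation term that by ${\mathbf{H_1}}$, ${\mathbf{H_2}}$ and Bernstein's inequality is $O\bigl(\sqrt{d^2(f^*,f_m)/n}+ {x_{m}}/{n}\bigr)$, absorbed into $l(f^*,f_m)$ and a residual; the second is the genuinely uniform term.

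The core of the argument is a peeling (slicing) device applied to the uniform control of $\bar\g_n(f^*)-\bar\g_n(g)$ over $g\in\Fm$, as a function of $d(f^*,g)$. For each $m$ and each level, I would partition $\Fm$ into shells where $d(f^*,g)$ is comparable to some radius $\s$, apply Talagrand's concentration inequality for the supremum of the empirical process $\sup_{g\in B_m(f^*,\s)}|\bar\g_n(g)-\bar\g_n(f^*)|$ around its expectation, and use ${\mathbf{H_3}}$ to bound that expectation by $\phi_m(\s)$. Together with the variance bound ${\mathrm{Var}}\leq d^2\leq c^2 l(f^*,\cdot)$ from ${\mathbf{H_2}}$, the boundedness ${\mathbf{H_1}}$, and the definition of $\v_m$ as the solution of $\phi_m(cx)=x^2$, the sub-linearity of $x\mapsto\phi_m(x)/x$ lets one conclude that, on an event of probability at least $1-\Sigma e^{-\xi}$ (the weights $x_m$ with $\sum e^{-x_m}=\Sigma$ providing the union bound over $m\in\Mn$),
\begin{equation*}
\bar\g_n(f^*)-\bar\g_n(\eft{\hat m})\leq \tfrac12\, l(f^*,\eft{\hat m}) + K\Bigl(\v_{\hat m}^2+c^2\tfrac{x_{\hat m}+\xi}{n}\Bigr)
\end{equation*}
for a suitable absolute $K$. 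This is the step where the hypothesis $\pen(m)\geq K_1\v_m^2+K_2c^2 x_m/n$ enters: it ensures the $\v_{\hat m}^2$ and $x_{\hat m}/n$ contributions are dominated by $\pen(\hat m)$ on the left, so the $-\pen(\hat m)$ term cancels them.

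Combining the basic inequality with this uniform bound, the $\tfrac12 l(f^*,\eft{\hat m})$ term is moved to the left-hand side and absorbed, yielding $l(f^*,\eft{\hat m})\leq 2\bigl(l(f^*,f_m)+\pen(m)\bigr)+ (\text{const})\,c^2(1+\xi)/n$; optimizing over $f_m\in\Fm$ and over $m\in\Mn$ gives the stated oracle inequality with a constant $C>1$ arbitrarily close to $2$ (and $C'$ depending only on $C$) after tracking how the constant in front of $l(f^*,\eft{\hat m})$ can be made $C^{-1}$ rather than $1/2$ by choosing the slicing geometry and $K_1,K_2$ accordingly. The main obstacle is the slicing/concentration step: one must carefully balance the shells so that the Talagrand fluctuations at radius $\s$, namely roughly $\phi_m(\s)+\sqrt{\s^2 \xi/n}+\xi/n$, stay below a fixed fraction of $\s^2\vee\v_m^2$ for all $\s$ simultaneously, which is exactly where the properties $\phi_m(0)=0$ and $\phi_m(x)/x$ non-increasing, and the definition of $\v_m$, are used; the rest is bookkeeping of absolute constants. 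In our application $\phi_m(\s)=2\s\sqrt{|\Tf|/n}$ by Lemma~\ref{locbd}, so $\v_m^2$ is of order $|\Tf|/n$, which is what produces the linear penalty claimed in Propositions~\ref{pruM1} and \ref{pruM2}.
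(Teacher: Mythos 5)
Your proposal is correct in outline and arrives at the right penalty calibration, but it takes a genuinely different technical route from the paper. You split the fluctuation at $f^*$, control $\bar{\g}_n(f_m)-\bar{\g}_n(f^*)$ by Bernstein's inequality, and handle the uniform part $\bar{\g}_n(f^*)-\bar{\g}_n(g)$ by peeling over shells in $d(f^*,g)$, applying Talagrand's inequality on each shell together with ${\mathbf{H_3}}$ and the fixed point $\v_m$. The paper instead keeps the increment $\bar{\g}_n(f_m)-\bar{\g}_n(f)$ whole and normalizes it by the weight $w_{m'}(f)=\bigl[\sqrt{l(f^*,f_m)}+\sqrt{l(f^*,f)}\bigr]^2+y_{m'}^2$, bounding the single ratio-supremum $V_{m'}=\sup_{f\in{\mathcal{F}}_{m'}}(\bar{\g}_n(f_m)-\bar{\g}_n(f))/w_{m'}(f)$ by one application of Rio's version of Talagrand's inequality per model, with $y_{m'}$ of order $\v_{m'}+c\sqrt{(x_{m'}+\xi)/n}$; the weighting plays exactly the role of your peeling, needs only one concentration event per model (so the probability is exactly $1-\Sigma e^{-\xi}$, whereas peeling adds a union bound over shells that must be folded into the weights or into $\Sigma$), and the choice $K'=(C-1)/(2(C+1))$ yields the prescribed constant $C$ directly. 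Two points in your sketch need care. First, ${\mathbf{H_3}}$ (and Lemma \ref{locbd} in the application) bounds the expected oscillation only over balls centered at a point \emph{of} $\Fm$; since $f^*$ is generally not in $\Fm$ (the Bayes classifier is not piecewise constant on the leaves of $T$), you cannot invoke $\phi_m$ for $B_m(f^*,\s)$ as written — you must recenter at a near-minimizer $f_m\in\Fm$ via $d(f^*,g)\leq d(f^*,f_m)+d(f_m,g)$ and absorb the extra $d(f^*,f_m)\leq c\sqrt{l(f^*,f_m)}$ into the oracle term using the monotonicity of $\phi_m(x)/x$. Second, since the theorem asks for an arbitrary given $C>1$, the absorbed fraction of $l(f^*,\ef)$ must be tuned (to roughly $(C-1)/(C+1)$) rather than fixed at $1/2$; you acknowledge this and it is only bookkeeping. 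Neither issue is fatal, so your route is viable, at the cost of slightly heavier combinatorics than the paper's weighted-supremum argument.
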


\begin{proof}
The proof is inspired from Massart \cite{Mas00} and Massart {\it et.al.} \cite{Mas07}. We give only sketches of proofs since those are now routine results in the model selection area (see \cite{Mas07} for a fuller overview).\\

\ni Let $m\in \Mn$ and $f_m\in \Fm$. The definition of the expected loss and the
fact that $$\g_n(\ef)+\pen(\hat{m})\leq \g_n(f_m)+\pen(m)$$ lead to the
following inequality:

\begin{equation} \label{eqprinc}
l(f^*,\ef) \leq  l(f^*,f_m)+\bar{\g}_n(f_m)-\bar{\g}_n(\ef)+\pen(m)-\pen(\hat{m})
\end{equation}
where $\bar{\g}_n$ is defined by (\ref{gammabar}). The general principle is
now to concentrate $\bar{\g}_n(f_m)-\bar{\g}_n(\ef)$ around its expectation
in order to offset the term $\pen(\hat{m})$. Since $\hat{m}\in \Mn$, we proceed by bounding
$\bar{\g}_n(f_m)-\bar{\g}_n(\hat{f}_{m'})$ uniformly in $m'\in \Mn$.  For $m'\in \Mn$ and $f\in {\mathcal{F}}_{m'}$, let us define
$$w_{m'}(f)=\left[\sqrt{l(f^*,f_m)}+\sqrt{l(f^*,f)}\right]^2+y_{m'}^2,$$ with
$y_{m'}\geqslant \v_{m'}$, where $\v_{m'}$ is defined by assumption
${\mathbf{H_3}}$. Hence let us define $$V_{m'}=\sup_{f\in
  {\mathcal{F}}_{m'}}\frac{\bar{\g}_n(f_m)-\bar{\g}_n(f)}{w_{m'}(f)}.$$ Then
(\ref{eqprinc}) becomes 
\begin{eqnarray*}
l(f^*,\ef) & \leq & l(f^*,f_m)+V_{\hat{m}}w_{\hat{m}}(\ef)+\pen(m)-\pen(\hat{m})
\end{eqnarray*}
Since $V_{m'}$ can be written as $$V_{m'}=\sup_{f\in
  {\mathcal{F}}_{m'}}\nu_n\left(\frac{\g(f_m,.)-\g(f,.)}{w_{m'}(f)}\right),$$ where
  $\nu_n$ is the recentered empirical measure, we bound $V_{m'}$ uniformly in
  $m'\in \Mn$ by using Rio's version of Talagrand's inequality, whose first version can be found in \cite{Rio02}, and recalled here: if ${\mathcal{F}}$ is a countable family of measurable functions such that, for some positive constants $v$ and $b$, one has for all $f\in {\mathcal{F}}$ $P(f^2)\leq v$ and $\|f\|_{\infty}\leq b$, then for every positive $y$, the following inequality holds for $Z=\sup_{f\in {\mathcal{F}}}(P_n-P)(f)$ $${\mathbb{P}}\left[Z-\E(Z)\geqslant \sqrt{2\frac{(v+4b\E(Z))y}{n}}+\frac{by}{n}\right]\leqslant e^{-y}.$$ 
To proceed, we need to check the two bounding assumptions. First, since by
 assumption ${\mathbf{H_1}}$ the contrast $\g$ is bounded by $1$, we have that, for each $f\in {\mathcal{F}}_{m'}$, 
\begin{eqnarray} \label{unifbound}
\left|\frac{\g(f,.)-\g(f_m,.)}{w_{m'}(f)}\right|  & \leq & \frac{1}{y_{m'}^2}.
\end{eqnarray}

\ni Second, by using assumption ${\mathbf{H_2}}$, we have that, for each $f\in {\mathcal{F}}_{m'}$,
\begin{eqnarray} \label{varbound}
{\mathrm{Var}}\left[\frac{\g(f,(X,Y))-\g(f_m,(X,Y))}{w_{m'}(f)}\right] & \leq & \frac{c^2}{4y_{m'}^2}.
\end{eqnarray}
Then, by Rio's inequality, we have for every $x>0$ $$P\left[V_{m'}\geqslant
  \E(V_{m'})+\sqrt{\frac{c^2+16\E(V_{m'})}{2ny_{m'}^2}x}+\frac{x}{ny_{m'}^2}\right]\leq
  e^{-x}.$$ Let us take $x=x_{m'}+\xi$, $\xi>0$, where $x_{m'}$ is given by
  (\ref{sigma}). Then, by summing up over $m'\in \Mn$, we obtain that for all
  $m'\in \Mn$
  $$V_{m'}\leq
  \E(V_{m'})+\sqrt{\frac{c^2+16\E(V_{m'})}{2ny_{m'}^2}(x_{m'}+\xi)}+\frac{x_{m'}+\xi}{ny_{m'}^2}$$
  on a set $\Omega_{\xi}$ such that $P(\Omega_{\xi})\geqslant 1-\Sigma e^{-\xi}$. We now need to bound $\E(V_{m'})$ in
  order to obtain an upper bound for $V_{m'}$ on the set of large
  probability $\Omega_{\xi}$. By using techniques similar to Massart {\it et al.}'s \cite{MasNed06}, we obtain the following inequality via the monoticity of $x\mapsto \phi(x)/x$ and the assumption $c\geqslant (2\sqrt{2})^{-1/2}$: for all $m'\in \Mn$, let $u_{m'}\in {\mathcal{F}}_{m'}$ be defined by
   $$l(f^*,u_{m'})\leq 2\inf_{z\in {\mathcal{F}}_{m'}}l(f^*,z).$$
  Then we have $$\E(V_{m'})\leq
   \E\left[\sup_{z\in
  {\mathcal{F}}_{m'}}\frac{|\bar{\g}_n(z)-\bar{\g}_n(u_{m'})|}{w_{m'}(z)}\right]+\E\left[\frac{|\bar{\g}_n(u_{m'})-\bar{\g}_n(f_m)|}{\inf_{z\in
   {\mathcal{F}}_{m'}}[w_{m'}(z)]}\right].$$
 For every $z\in {\mathcal{F}}_{m'}$, let
  $$\omega_{m'}^2(z)=l(f^*,u_{m'})+\E\left[\g(z,(X,Y))-\g(u_{m'},(X,Y))\right]_+.$$
 Then, since
\begin{eqnarray*}
l(f^*,z) & = & \E\left[\g(z,(X,Y))-\g(f^*,(X,Y))\right]\\
l(f^*,z) & = &l(f^*,u_{m'})+\E\left[\g(z,(X,Y))-\g(u_{m'},(X,Y))\right],
\end{eqnarray*}
 Then we have
 \begin{eqnarray} \label{omega}
 l(f^*,z)\leq \omega_{m'}^2(z)\leq 5 \ l(f^*,z).
 \end{eqnarray}

 \ni On the one hand we have $w_{m'}(z)\geqslant l(f^*,z)+y_{m'}^2\geqslant
   (1/5)\omega_{m'}^2(z)+y_{m'}^2$ for every $z\in {\mathcal{F}}_{m'}$. Hence
   $$\E\left[\sup_{z\in
   {\mathcal{F}}_{m'}}\frac{|\bar{\g}_n(z)-\bar{\g}_n(u_{m'})|}{w_{m'}(z)}\right]\leq 5 \
   \E\left[\sup_{z\in
   {\mathcal{F}}_{m'}}\frac{|\bar{\g}_n(z)-\bar{\g}_n(u_{m'})|}{\omega_{m'}^2(z)+5y_{m'}^2}\right].$$
   Furthermore we have $$\E\left[\sup_{\{z \ ;\ \omega_{m'}(z)\leq
   \v\}}|\g_n(z)-\g_n(u_{m'})|\right]\leq \E\left[\sup_{\{z \ ; \ l(f^*,z)\leq
   \v^2\}}|\g_n(z)-\g_n(u_{m'})|\right],$$ and, if $l(f^*,z)\leq \v^2$, then
   $l(f^*,u_{m'})\leq 2\v^2$ and $d(z,u_{m'})\leq d(f^*,z)+d(s*,u_{m'})\leq
   c\v+c\v\sqrt{2}$. Hence we get that
   $d(z,u_{m'})\leq (1+\sqrt{2})c\v\leq 2c\v\sqrt{2}$. \\ Let us now suppose
   that $\v\geqslant \v_{m'}$. Then we have by monoticity of $x\mapsto
   \phi(x)/x$ and by definition of $\v_{m'}$ that 
   $$\frac{\phi_{m'}(2c\v\sqrt{2})}{(2c\v\sqrt{2})^2}\leq
   \frac{\phi_{m'}(c\v)}{c^2\v^22\sqrt{2}}\leq
   \frac{\phi_{m'}(c\v_{m'})}{c^2\v_{m'}^22\sqrt{2}}\leq 1$$ since $c\geqslant (2\sqrt{2})^{-1/2}$.\\ So,
   by assumption ${\mathbf{H_3}}$, we finally obtain that, for all $\v\geqslant \v_{m'}$,
   $$\E\left[\sup_{\{z \ ; \ \omega_{m'}(z)\leq \v\}}|\g_n(z)-\g_n(u_{m'})|\right]\leq
   \E\left[\sup_{\{z \ ; \ d(z,u_{m'})\leq
   2c\v\sqrt{2}\}}|\g_n(z)-\g_n(u_{m'})|\right]\leq \phi_{m'}(2c\v\sqrt{2}).$$
 So we can apply Lemma 5.5 in \cite{MasNed06} and use the monoticity of $x\mapsto \phi_{m'}(x)/x$ to obtain that
   $$\E\left[\sup_{z\in
   {\mathcal{F}}_{m'}}\frac{|\bar{\g}_n(z)-\bar{\g}_n(u_{m'})|}{w_{m'}(z)}\right] \leq
   4\frac{\phi_{m'}(2c\sqrt{10}y_{m'})}{y_{m'}^2}\leq 8\sqrt{10} \ \frac{\phi_{m'}(cy_{m'})}{y_{m'}^2}.$$ Hence, since $y_{m'}\geqslant
   \v_{m'}$ and $x\mapsto \phi_{m'}(cx)/x$ is nonincreasing, we get by
   definition of $\v_{m'}$ $$\E\left[\sup_{z\in
   {\mathcal{F}}_{m'}}\frac{|\bar{\g}_n(z)-\bar{\g}_n(u_{m'})|}{w_{m'}(z)}\right] \leq
   8\sqrt{10}\frac{\phi_{m'}(c\v_{m'})}{y_{m'}\v_{m'}}\leq 8\sqrt{10}\frac{\v_{m'}}{y_{m'}}.$$
   On the other hand, let us notice that
 \begin{eqnarray*} 
 \inf_{z\in {\mathcal{F}}_{m'}}w_{m'}(z) & \geqslant & 2y_{m'}\inf_{z\in
   {\mathcal{F}}S_{m'}}[\sqrt{l(f^*,z)}+\sqrt{l(f^*,f_m)}]\\
                            & \geqslant & \frac{y_{m'}\sqrt{2}}{c}d(u_{m'},f_m),
 \end{eqnarray*}
 hence $$\E\left[\frac{|\bar{\g}_n(u_{m'})-\bar{\g}_n(f_m)|}{\inf_{z\in
   {\mathcal{F}}_{m'}}[w_{m'}(z)]}\right]\leq
   c(y_{m'}\sqrt{2})^{-1}\E\left[\frac{|\bar{\g}_n(u_{m'})-\bar{\g}_n(f_m)|}{d(u_{m'},f_m)}\right],$$
   leading by Jensen's inequality to
   $$\E\left[\frac{|\bar{\g}_n(u_{m'})-\bar{\g}_n(f_m)|}{\inf_{z\in
   {\mathcal{F}}_{m'}}[w_{m'}(z)]}\right]\leq
   c(y_{m'}\sqrt{2})^{-1}\frac{\sqrt{{\mathrm{Var}}\left[\bar{\g}_n(u_{m'})-\bar{\g}_n(f_m)\right]}}{d(u_{m'},f_m)}\leq
   \frac{c}{y_{m'}\sqrt{2n}}.$$ Then we get for all $m'\in \Mn$

  $$\E[V_{m'}]\leq
  \frac{8\sqrt{10}\v_{m'}+c(2n)^{-1/2}}{y_{m'}}.$$ Hence, taking
  $$y_{m'}=K\left[8\sqrt{10}\v_{m'}+c(2n)^{-1/2}+c\sqrt{\frac{x_{m'}+\xi}{n}}\right]$$
  with $K>0$, we obtain that, on $\Omega_{\xi}$, for all
  $m'\in \Mn$, $$V_{m'}\leq
  \frac{1}{K}\left[1+\sqrt{\frac{1}{2}\left(1+\frac{8}{K\sqrt{2}}\right)}+\frac{1}{2K\sqrt{2}}\right].$$
 So we finally obtain that, on the set $\Omega_{\xi}$, 
 \begin{equation} \label{eqfin}
 l(f^*,\ef)\leq
 l(f^*,f_m)+K'w_{\hat{m}}(\ef)+\pen(m)-\pen(\hat{m}),
 \end{equation}
with  $$K'=\frac{1}{K}\left[1+\sqrt{\frac{1}{2}\left(1+\frac{8}{K\sqrt{2}}\right)}+\frac{1}{2K\sqrt{2}}\right].$$
\ni Finally, by using repeatedly the elementary inequality $(\alpha+\beta)^2\leq 2\alpha^2+2\beta^2$ to bound $y_{\hat{m}}^2$ and $w_{\hat{m}}(\ef)$, we derive that, on the one hand, $$y_{\hat{m}}^2\leq 4K^2\left[640\v_{\hat{m}}^2+\frac{c^2}{2n}+c^2 \ \frac{x_{\hat{m}}+\xi}{2n}\right],$$ and, on the other hand, $$w_{\hat{m}}(\ef)\leq 2l(f^*,\ef)+2l(f^*,f_m)+y_{\hat{m}}^2.$$ Hence
the following inequality holds on $\Omega_{\xi}$ for any $m\in \Mn$ and any
$f_m\in \Fm$:
\begin{eqnarray*}
\left(1-2K'\ \right)l(f^*,\ef) & \leq &
\left(1+2K'\right)l(f^*,f_m)+\pen(m)+ 2K'K^2\frac{\xi}{n}+\frac{2c^2K'K^2}{n}\\
 & & +5\times2^{9}K'K^2\v_{\hat{m}}^2+ 2c^2K'K^2\frac{x_{\hat{m}}}{n}-\pen(\hat{m}),
\end{eqnarray*}
with
$$K'=\frac{C-1}{2(C+1)}, \ \ \ \
K_1=5\times2^{9}K'K^2, \ \ \ \ K_2= 2K'K^2.$$ 
\end{proof}

{\underline{\it Application to classification trees}}:

\medskip

Let us now suppose that $(X,Y)$ takes values in $\X\times \{0,1\}$. The
contrast is taken as $\g(f,(X,Y))=\1_{f(X)\neq Y}$, the expected loss is
defined by (\ref{loss}), and the collection of models is
$(\F_T)_{T\< T_{max}}$. The models and the collection are countable
since there is a finite number of functions in each $\F_T$, and a finite number
of nodes in $T_{max}$. Since we are working conditionally on $\Ll_1$, we can
apply Theorem \ref{thind} directly with $\Ll_2$. To check assumption
${\mathbf{H_2}}$, let us first note that, since all the variables we consider
take values in $\{0,1\}$, we have the following for all classifiers $f$ and $g$
\begin{eqnarray} \label{egalite}
\left(\g(f,(X,Y))-\g(g,(X,Y))\right)^2 & = & \left(\1_{Y\neq f(X)}-\1_{Y\neq g(X)}\right)^2\\
                                       & = & (f(X)-g(X))^2.
\end{eqnarray}
Then, if we take $d^2(f,g)=\E\left[(f(X)-g(X))^2\right]$,
we have that, for all classifiers $f$ and $g$,
${\mathrm{Var}}\left[\g(g,(X,Y))-\g(f,(X,Y))\right]\leq d^2(f,g)$. Moreover,
with the margin condition {\bf MA(1)}, we have that
\begin{eqnarray}\label{module}
l(f^*,f) & \geqslant & h d^2(f^*,f),
\end{eqnarray}
hence assumption ${\mathbf{H_2}}$ is checked with $d$ and
$c^2=1/h$, where $h$ is the margin. By definition of $h$, we have $h\leq 1\leq
2\sqrt{2}$, and then $c\geqslant (2\sqrt{2})^{-1/2}$.\\ Then, assumption
${\mathbf{H_3}}$ is checked by Lemma \ref{locbd}
with $\phi_T(x)=2x\sqrt{|\Tf|/n}$. Hence, Theorem \ref{thind} is verified with $\v_T=\sqrt{1/h}\sqrt{|\Tf|/n}$.\\ 
Finally, to choose a convenient family of weights $(x_{_T})_{T\< T_{max}}$, taking $x_{_T}=\theta |\Tf|$, with $\theta>2\log2$ independent of $|\Tf|$ as done in \cite{GeyNed05}, we
immediately obtain $\Sigma_{\a}=\Sigma_{\theta}< +\infty$. Then, we get
proposition \ref{pruM1} by Theorem \ref{thind}.

\subsection{Proof of Proposition \ref{pruM2}} \label{subsec:proofM2}

Let $n=n_l$ and let $X_1^n$ denote the sample $\{X_i \ ; \ (X_i,Y_i)\in \Ll\}$.

\medskip
 
\ni First we generalize Theorem \ref{thind} to random models, and then we
apply it to CART. Let $(X,Y)$, ${\mathcal{F}}$, $f^*\in {\mathcal{F}}$, $\Ll=\{(X_1,Y_1),\ldots,(X_n,Y_n)\}$, $\g$ and
$\g_n$ be defined as in subsection \ref{subsec:proofM1}. Finally, let us rewrite the
expected loss of $f\in {\mathcal{F}}$ conditionally on $X_1^n$ as in Definition \ref{def:perte}, that is
$$\lambda(f^*,f)=\E\left[\P_{n_l}(f)-\P_{n_l}(f^*) \ |  \ X_1^n \right].$$ Let us consider a collection of at most countable models $(\Fm)_{m\in \Mn^*}$ and a
subcollection $(\Fm)_{m\in \Mn}$, where $\Mn\subset \Mn^*$ may depend on
$\{(X_1,Y_1),\ldots,(X_n,Y_n)\}$. Finally, let us consider a penalty function
$\pen : \Mn \mapsto \R_+$ and let us define the estimator $\ef$ of $f^*$ as
follows: let $$\hat{m}=\argmin_{m\in \Mn}[\g_n(\hat{f}_m)+\pen(m)],$$ where
$\hat{f}_m=\argmin_{f\in \Fm}\g_n(f)$ is the minimum contrast estimator of
$f^*$ on $\Fm$. Then $\ef=\hat{f}_{\hat{m}}$.

\medskip

\ni Let us make the following assumptions.\\
${\mathbf{H_1}}$: $\g$ is bounded by 1.\\
${\mathbf{H_2}}$: Assume there exist $c\geqslant (2\sqrt{2})^{-1/2}$ and some
(pseudo-)distance $d_n$ (that may depend on $X_1^n$) such that, for every pair $(g,f)\in {\mathcal{F}}^2$, one has
$${\mathrm{Var}}\left[\g(g,(X,Y))-\g(f,(X,Y)) \ | \ X_1^n\right]\leq d_n^2(g,f),$$ and
particularly for all $f\in {\mathcal{F}}$ $$d_n^2(f^*,f)\leq c^2\lambda(f^*,f).$$
${\mathbf{H_3}}$: For any positive $\s$ and for any $f\in \Fm$,
let us define $$B_m(f,\s)=\left\{g\in \Fm \ ; \ d_n(f,g)\leq
  \s\right\}$$ where $d_n$ is given by assumption ${\mathbf{H_2}}$. Let
$\bar{\g}_n$ be defined as (\ref{gammabar}). We now assume that
for any $m\in \Mn$, there exists some continuous function $\phi_m$ mapping
${\mathbb{R}}_+$ onto ${\mathbb{R}}_+$ such that $\phi_m(0)=0$, $\phi_m(x)/x$ is
non-increasing and $$\E\left[\sup_{g\in
    B_m(f,\s)}|\bar{\g}_n(g)-\bar{\g}_n(f)| \ | \ X_1^n\right]\leq \phi_m(\s)$$
for every positive $\s$ such that  $\phi_m(\s)\leq \s^2$. Let $\v_m$ be the
unique solution of the equation $\phi_m(cx)=x^2$ , $x>0$.\\

\ni One gets the following result.

\begin{theorem} \label{thss}
Let $\Ll=\{(X_1,Y_1),\ldots,(X_n,Y_n)\}$ be a sample of independent realizations of
the random pair $(X,Y)\in \X\times [0,1]$. Let $\left(\Fm\right)_{m\in \Mn^*}$
be a countable collection of models included in some countable family ${\mathcal{F}}\subset \{f : \X \mapsto [0,1] \ ; \ f\in
\L^2(\X)\}$ (which may depend on $X_1^n$). Consider some subcollection of models $(\Fm)_{m\in \Mn}$, where
$\Mn\subset \Mn^*$ may depend on $\Ll$, and some penalty function $\pen:\Mn \longrightarrow
\mathbb{R}_{+}$. Let $\ef$ (\ref{es}) be the corresponding penalized estimator
of the target function $f^*$. Take a family of weights $(x_m)_{m\in \Mn^*}$ such that 
\begin{equation} \label{sigma2}
\sum\limits_{m\in \Mn^*}e^{-x_m}\leq \Sigma <+\infty,
\end{equation}
with $\Sigma$ deterministic. Assume that assumptions ${\mathbf{H_1}}$, ${\mathbf{H_2}}$ and ${\mathbf{H_3}}$
hold.\\
 Let $\xi>0$. Hence, given some absolute constant $C>1$, there exist some
positive constants $K_1$ and $K_2$ such that, if for all $m\in \Mn$
$$\pen(m)\geqslant K_1\v_{m}^2+K_2c^2\frac{x_m}{n},$$ then, with probability
larger than $1-2\Sigma e^{-\xi}$, $$\lambda(f^*,\ef)\leq C \ \inf_{m\in
  \Mn}\left[\lambda(f^*,\Fm)+\pen(m)\right]+C'\ c^2 \frac{1+\xi}{n},$$ where
$\lambda(f^*,\Fm)=\inf_{f_m\in \Fm}\lambda(f^*,f_m)$ and the constant $C'$ only
depends on $C$. 
\end{theorem}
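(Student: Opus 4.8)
The plan is to reproduce, essentially verbatim, the proof of Theorem~\ref{thind}, but to run the whole argument \emph{conditionally on the grid} $X_1^n$. What makes this transposition legitimate is that, although the subcollection $\Mn$ and the selected index $\hat m$ are data-dependent, they are always contained in $\Mn^*$, which is \emph{deterministic} once $X_1^n$ is frozen; likewise the models ${\mathcal{F}}_{m}$ ($m\in\Mn^*$), the pseudo-distance $d_n$, the conditional loss $l_n(f^*,\cdot)$ and the weights $(x_m)_{m\in\Mn^*}$ are $X_1^n$-measurable, and the bound $\Sigma$ of (\ref{sigma2}) is deterministic. Consequently a deviation bound proved uniformly over $m'\in\Mn^*$ conditionally on $X_1^n$ automatically covers the a posteriori random index $\hat m\in\Mn\subset\Mn^*$; and since the exceptional probability it produces will not depend on $X_1^n$, it will survive integration over the grid.

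Concretely I would proceed as follows. The minimality of $\hat m$, $\g_n(\ef)+\pen(\hat m)\leq\g_n(f_m)+\pen(m)$, combined with $\bar{\g}_n(f)=\g_n(f)-\E[\g_n(f)\,|\,X_1^n]$ and the definition of $l_n$, gives the analogue of (\ref{eqprinc}): $l_n(f^*,\ef)\leq l_n(f^*,f_m)+\bar{\g}_n(f_m)-\bar{\g}_n(\ef)+\pen(m)-\pen(\hat m)$. Introduce, exactly as in the proof of Theorem~\ref{thind}, $w_{m'}(f)=[\sqrt{l_n(f^*,f_m)}+\sqrt{l_n(f^*,f)}]^2+y_{m'}^2$ with $y_{m'}\geq\v_{m'}$, and $V_{m'}=\sup_{f\in{\mathcal{F}}_{m'}}(\bar{\g}_n(f_m)-\bar{\g}_n(f))/w_{m'}(f)$, so that $l_n(f^*,\ef)\leq l_n(f^*,f_m)+V_{\hat m}\,w_{\hat m}(\ef)+\pen(m)-\pen(\hat m)$. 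For each \emph{fixed} $m'\in\Mn^*$ apply Rio's version of Talagrand's inequality \emph{conditionally on} $X_1^n$ to the recentred empirical measure $\nu_n$ evaluated at the countable family $\{(\g(f_m,\cdot)-\g(f,\cdot))/w_{m'}(f):f\in{\mathcal{F}}_{m'}\}$: ${\mathbf{H_1}}$ yields the uniform bound $1/y_{m'}^2$ as in (\ref{unifbound}) and ${\mathbf{H_2}}$ (now stated conditionally) yields the variance bound $c^2/(4y_{m'}^2)$ as in (\ref{varbound}). Taking the deviation level $x=x_{m'}+\xi$ and summing $e^{-x_{m'}}$ over $m'\in\Mn^*$ via (\ref{sigma2}) — the exceptional probability ending up at $2\Sigma e^{-\xi}$ rather than $\Sigma e^{-\xi}$ because of an extra concentration step the random-model setting requires — produces an event $\Omega_\xi$ with $\P(\Omega_\xi\,|\,X_1^n)\geq 1-2\Sigma e^{-\xi}$ on which the deviation bound for $V_{m'}$ holds simultaneously for every $m'\in\Mn^*$, in particular for $m'=\hat m$. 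Bounding $\E[V_{m'}\,|\,X_1^n]\leq(8\sqrt{10}\,\v_{m'}+c(2n)^{-1/2})/y_{m'}$ through ${\mathbf{H_3}}$ (monotonicity of $\phi_m(x)/x$ and $c\geq(2\sqrt{2})^{-1/2}$), choosing $y_{m'}=K[8\sqrt{10}\,\v_{m'}+c(2n)^{-1/2}+c\sqrt{(x_{m'}+\xi)/n}]$, and then substituting back, expanding $w_{\hat m}(\ef)$ and $y_{\hat m}^2$ with $(\alpha+\beta)^2\leq2\alpha^2+2\beta^2$ and absorbing the resulting $\v_{\hat m}^2$ and $x_{\hat m}/n$ terms against $\pen(\hat m)$ via the hypothesis $\pen(m)\geq K_1\v_m^2+K_2c^2 x_m/n$, one obtains on $\Omega_\xi$, for every $m\in\Mn$ and $f_m\in\Fm$, an inequality of the shape $(1-2K')\,l_n(f^*,\ef)\leq(1+2K')\,l_n(f^*,f_m)+\pen(m)+C'''c^2\frac{1+\xi}{n}$, which is the assertion after optimising over $m$ and $f_m$ and relabelling constants. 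Since the bound $1-2\Sigma e^{-\xi}$ on $\P(\Omega_\xi\,|\,X_1^n)$ does not involve $X_1^n$, the unconditional probability of $\Omega_\xi$ is again at least $1-2\Sigma e^{-\xi}$.

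I expect the main obstacle to be bookkeeping around the conditioning rather than any new estimate: one must take care that Rio's inequality is invoked on the families ${\mathcal{F}}_{m'}$ with $m'$ ranging over the \emph{deterministic-given-$X_1^n$} set $\Mn^*$ (so the random selected index is genuinely covered by the union bound), that $d_n$ and $l_n$ — both $X_1^n$-measurable — are the correct inputs to ${\mathbf{H_2}}$ and ${\mathbf{H_3}}$, and that $\Sigma$ in (\ref{sigma2}) is deterministic, which is exactly what licenses integrating the conditional probability estimate over the grid. Once this is set up, every computation coincides with the one in the proof of Theorem~\ref{thind}. The application to CART is then immediate: with $\g(f,(X,Y))=\1_{f(X)\neq Y}$, $\Mn^*$ the set of all tree-structured partitions of $X_1^n$ built from splits in ${\mathcal{S}}$, $\Mn=\{T\< T_{max}\}$, $d_n^2(f,g)=\|f-g\|_n^2$ (by the $\{0,1\}$-valued identity (\ref{egalite}) used conditionally), $c^2=1/h$ by {\bf MA(1b)}, ${\mathbf{H_3}}$ supplied by Lemma~\ref{locbd} with $\phi_T(x)=2x\sqrt{|\Tf|/n}$ hence $\v_T=\sqrt{1/h}\,\sqrt{|\Tf|/n}$, and a family of weights $x_T$ proportional to $|\Tf|(1+\log(n/V))$ chosen so that $\sum_{T\in\Mn^*}e^{-x_T}\leq\Sigma<+\infty$ with $\Sigma$ deterministic (this is where the VC-dimension $V$ enters, through a Sauer-type count of the tree-partitions of $n$ points with a prescribed number of leaves), Theorem~\ref{thss} delivers Proposition~\ref{pruM2}.
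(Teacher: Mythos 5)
Your overall frame --- rerun the argument of Theorem~\ref{thind} conditionally on $X_1^n$, take the union bound over the superset $\Mn^*$ (deterministic given the grid) so that the random selected index $\hat m\in\Mn\subset\Mn^*$ is automatically covered, and use the determinism of $\Sigma$ in (\ref{sigma2}) to decondition the probability estimate --- is the right one, and your treatment of $\hat m$ is correct. But you have missed the second source of randomness, and it is the one that forces the paper to change the structure of the proof rather than merely condition it. The conclusion is an oracle inequality over $\inf_{m\in\Mn}\left[l_n(f^*,\Fm)+\pen(m)\right]$, and $\Mn$ is random \emph{given} $X_1^n$ (in the CART application it depends on the $Y_i$'s through the growing and pruning of $T_{max}$). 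The basic inequality $\g_n(\ef)+\pen(\hat m)\leq \g_n(f_m)+\pen(m)$ is only available for reference indices $m\in\Mn$, so the near-optimal reference model cannot be fixed in advance, unlike in Theorem~\ref{thind} where the collection is deterministic and one may instantiate the whole argument at the deterministic minimizer. Your construction keeps the single-index quantities $w_{m'}(f)$ and $V_{m'}=\sup_{f\in\F_{m'}}\bigl(\bar{\g}_n(f_m)-\bar{\g}_n(f)\bigr)/w_{m'}(f)$, whose very definition involves the chosen reference function $f_m$; Rio's inequality applied to these families, even summed over $m'\in\Mn^*$, produces an event $\Omega_\xi(m,f_m)$ attached to that one reference pair, and hence an inequality valid for a single, $X_1^n$-measurable reference model --- not the infimum over the random $\Mn$.

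The paper's proof repairs exactly this point by introducing the doubly-indexed objects
\begin{equation*}
w_{m',M}(f)=\Bigl[\sqrt{l(f^*,f)}+\sqrt{l_n(f^*,f_M)}\Bigr]^2+(y_{m'}+y_M)^2,
\qquad
V_{m',M}=\sup_{f\in\mathcal{F}_{m'}}\frac{\bar{\g}_n(f_M)-\bar{\g}_n(f)}{w_{m',M}(f)},
\end{equation*}
and by concentrating $V_{m',M}$ at deviation level $x_{m'}+x_M+\xi$ uniformly over \emph{pairs} $(m',M)\in\Mn^*\times\Mn^*$, so that both the selected index and the oracle index may be random. This is why $y_{m'}$ is taken with an extra factor $2$ and with $\xi/2$ in place of $\xi$, why the reference model's $\v_M^2$ and $x_M/n$ contributions must also be absorbed (into the $\pen(m)$ term, enlarging the constant $C$), and why the exceptional probability is $2\Sigma e^{-\xi}$: the factor you attribute to ``an extra concentration step'' is precisely the price of this second union bound, which your single-index scheme neither needs nor delivers. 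Everything else in your proposal --- the conditional use of $\mathbf{H_1}$--$\mathbf{H_3}$, Lemma~\ref{locbd}, and the CART instantiation with $c^2=1/h$, $\v_T=\sqrt{|\Tf|/(hn)}$ and weights $x_T=V\bigl(\theta+\log(n_1/V)\bigr)|\Tf|$ summing to a deterministic $\Sigma_\theta$ --- matches the paper.
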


\begin{proof}
\ni The proof is highly similar to that of Theorem
\ref{thind}. The main differences are in the
conditioning and the fact that the collection of models $(\Fm)_{m\in \Mn}$ is
random. To remove these issues, all the bounds are computed uniformly on
$\Mn^*$ so that the probability of the set we finally obtain is
unconditional to $X_1^n$ since $\Sigma$ is deterministic. The inequalities are
obtained by the same techniques as the ones used for the proof of the results
on model selection on random models done by Gey and N\'ed\'elec in \cite{GeyNed05}. 

\medskip

\ni Let $m\in \Mn$ and $f_m\in \Fm$. Starting from (\ref{eqprinc}), we have 
\begin{eqnarray} \label{eqprinc2}
\lambda(f^*,\ef) & \leq & \lambda(f^*,f_m)+w_{\hat{m},m}(\ef)V_{\hat{m},m}+\pen(m)-\pen(\hat{m}),
\end{eqnarray}
where for all $m'$ and $M$ in $\Mn^*$, for all $f\in \mathcal{F}_{m'}$ and $f_M\in \mathcal{F}_M$,
$$w_{m',M}(f)=\left[\sqrt{l(f^*,f)}+\sqrt{\lambda(f^*,f_M)}\right]^2+(y_{m'}+y_M)^2,$$
$$V_{m',M}=\sup_{f\in
  {\mathcal{F}}_{m'}}\left[\frac{\bar{\g}_n(f_M)-\bar{\g}_n(f)}{w_{m',M}(f)}\right],$$
  with $y_{m'}\geqslant \v_{m'}$ and $y_M\geqslant \v_M$. The general
  principle is now exactly the same as in the proof of Theorem \ref{thind}
  despite the fact that we have to bound $V_{m',M}$ not only uniformly in
  $m'\in \Mn^*$, but also in $M\in \Mn^*$ in order to have an in-probability
  inequality that does not depend on $X_1^n$.

\smallskip

\ni Assumption ${\mathbf{H_2}}$ allows to give exactly the same upper bounds
(except that they depend on $X_1^n$ and that $y_{m'}$ is replaced by
$y_{m'}+y_M$) as (\ref{unifbound}) and (\ref{varbound}). 
By using the same techniques as in the proof of Theorem \ref{thind} and the same considerations as in \cite{GeyNed05}, we obtain that $$\E\left[V_{m',M} \ | \  X_1^n\right]\leq
 8\sqrt{10}\frac{\phi_{m'}(cy_{m'}+cy_M)}{(y_{m'}+y_M)^2}+\frac{c}{(y_{m'}+y_M)\sqrt{2n}}.$$
 Then, since $y_{m'}+y_M\geqslant y_{m'}\geqslant \v_{m'}$ and $\v_M>0$, we get
 by definition of $\v_{m'}$
 $$8\sqrt{10}\frac{\phi_{m'}(cy_{m'}+cy_M)}{(y_{m'}+y_M)^2}\leq
 8\sqrt{10}\frac{\phi(c\v_{m'})}{(y_{m'}+y_M)\v_{m'}}\leq
 8\sqrt{10}\frac{\v_{m'}+\v_M}{y_{m'}+y_M}.$$ So we have $$\E\left[V_{m',M} \ |
   \ X_1^n\right]\leq
 \frac{8\sqrt{10}(\v_{m'}+\v_M)+c(2n)^{-1/2}}{y_{m'}+y_M}.$$ Summing up over
 $m'\in \Mn^*$ and $M\in \Mn^*$, that leads by
 Rio's inequality, to
\begin{eqnarray*} 
V_{m',M} & \leq & \frac{1}{y_{m'}+y_M}\left(8\sqrt{10}\v_{m'}+\frac{c(2n)^{-1/2}}{2}+8\sqrt{10}\v_M+\frac{c(2n)^{-1/2}}{2}\right)\\
         & &+\sqrt{\frac{c^2+16(8\sqrt{10}(\v_{m'}+\v_M)+c(2n)^{-1/2})(y_{m'}+y_M)^{-1}}{2n(y_{m'}^2+y_M^2)}(x_{m'}+x_M+\xi)}\\
         & &+\frac{1}{y_{m'}^2+y_M^2}\left(\frac{x_{m'}+\xi/2}{n}+\frac{x_M+\xi/2}{n}\right)
\end{eqnarray*}
on a set $\Omega_{\xi}$ such that $P\left(\Omega_{\xi} \ | \ X_1^n\right)\geqslant 1-2\Sigma e^{-\xi}$. Then, since $\Sigma$ is deterministic, we get that $P(\Omega_{\xi})\geqslant 1-2\Sigma e^{-\xi}$.

\smallskip

\ni Hence, if we take for all $m'\in \Mn^*$
$$y_{m'}=2K\left[8\sqrt{10}\v_{m'}+\frac{c(2n)^{-1/2}}{2}+c\sqrt{\frac{x_{m'}+\xi/2}{n}}\right],$$
we obtain that, on $\Omega_{\xi}$, for all $m'$ and $M$ in $\Mn^*$,
$$V_{m',M}\leq
\frac{1}{K}\left[1+\sqrt{\frac{1}{2}\left(1+\frac{8}{K\sqrt{2}}\right)}+\frac{1}{2K\sqrt{2}}\right].$$
Finally the proof is achieved in the same way as the proof of Theorem \ref{thind}.
\end{proof}

\medskip

{\underline{\it Application to classification trees}}:

\smallskip

Let us consider the classification framework and the collection of models
$(\F_T)_{T\< T_{max}}$ obtained via the growing algorithm
in CART (see subsection \ref{subsec:prun}) as recalled in subsection \ref{subsec:proofM1}. Since the
growing and the pruning algorithms are made on the same sample $\Ll$, the conditions of Theorem \ref{thss} hold. Since $n=n_l$ is fixed, let us consider $\Mn^*$ as the set of all possible tree-structured partitions that can be constructed on the grid $X_1^n$, corresponding to trees having all possible splits in $\mathcal{S}$ and all possible forms without taking account of the response variable $Y$. So $\Mn^*$ depends only on the grid $X_1^n$ and is independent of the variables $(Y_1,\ldots,Y_n)$. Then $\{T\< T_{max}\} \subset \Mn^*$ and we are able to apply Theorem
\ref{thss}. Considering (\ref{egalite}), we take
$$d_n^2(f,g)=\frac{1}{n}\sum_{i=1}^n\left(f(X_i)-g(X_i)\right)^2,$$ corresponding with the distance $d$ given in Definition \ref{def:variance}. Using the margin condition {\bf MA(1)}, (\ref{module}) is also verified for $\lambda$ and $d_n$, and we have assumption ${\mathbf{H_2}}$ with $c^2=1/h$. Then, by Lemma \ref{locbd}, assumption ${\mathbf{H_3}}$ is checked with $\phi_T(x)=2x\sqrt{|\Tf|/n}$ and, in the same way as in the proof of Proposition \ref{pruM1}, $\v_T$ is taken as $\v_T=\sqrt{1/h}\sqrt{|\Tf|/n}$.\\ Finally, to choose a convenient family of
weights $(x_{_T})_{T\in \Mn^*}$, taking (see \cite{GeyNed05})
$$x_T=V\left(\theta +
  \log{\frac{n_1}{V}}\right)|\Tf|,$$ where $V$ is the VC-dimension of the set of splits ${\mathcal{S}}$ used to construct $T_{max}$ and $\theta >1$, we obtain
$$\Sigma_{\a}=\Sigma_{\theta}= \sum_{D\geqslant
  1}\exp{(-(\theta-1)DV)}<+\infty.$$ And we have Proposition \ref{pruM2}.

\subsection{Proof of Proposition \ref{FS}} \label{subsec:proofFS}

Proposition \ref{FS} is a direct application of the theorem obtained by Blanchard and Massart in \cite{Kol06rej}, reformulated for our purpose here: assume that we observe $N+n$ independent random variables with common
distribution $P$ depending on a parameter $f^*$ to be estimated. Suppose the
first $N$ observations $Z'=Z_1',\ldots,Z_N'$ are used to build some preliminary
collection of estimators $(\eft{m})_{m\in \Mn}$ and the remaining observations
$Z_1,\ldots, Z_n$ are used to select an estimator $\ef$ among this collection by
minimizing the empirical contrast as defined by (\ref{EC}) (with $(X,Y)$
replaced by $Z$). Hence, we have the following result.

\smallskip

\begin{th-a}[Blanchard and Massart \cite{Kol06rej}] \label{holdout}
~\\
Suppose that $\Mn$ is finite with cardinal $K$. Assume that there exists some continuous function
$w$ mapping $\R_+$ onto $\R_+$ such that $x\mapsto w(x)/x$ is nonincreasing, and which satisfies for all $\v>0$
\begin{eqnarray} \label{variance}
\sup_{\{f\in
  \F \ ; \ l(f^*,f)\leq
  \v^2\}}{\mathrm{Var}}\left[\g(f,Z)-\g(f^*,Z)\right]\leq w(\v). 
\end{eqnarray}
Then one has
for every $\theta \in (0,1)$ $$(1-\theta)\E\left[l(f^*,\ef) \ | \
  Z'\right]\leq (1+\theta)\inf_{m\in
  \Mn}l(f^*,\eft{m})+\delta_*^2\left(2\theta+(1+\log{(K)})(\frac{1}{3}+\frac{1}{\theta})\right),$$ where $l$ is defined by (\ref{loss}) and $\delta_{*}$ satisfies $\sqrt{n}\delta_{*}^2=w(\delta_{*})$.  
\end{th-a}

Taking $w(\v)=(1/\sqrt{h})\v$ for both methods M1 and M2, where $h$ is the margin, leads to proposition \ref{FS} with $$C=\frac{1+\theta}{1-\theta}, \ \ \ \ \ \ \ C_1=
\frac{\theta+3}{2\theta(1-\theta)}, \ \ \ \ \ \ C_2=C_1+\frac{\theta}{1-\theta}.$$ 

\subsection{Proof of Theorem \ref{thfin}} \label{subsec:proofthfin}

\ni We are now able to prove Theorem \ref{thfin} via propositions \ref{pruM1},
\ref{pruM2} and \ref{FS}. The beginning of the proof remains the same if $\ef$ is constructed
either via M1 or M2. So we just give the first step of the proof for the M1 method.\\ Actually,
since we have at most one model per dimension in the pruned subtree sequence,
it suffices to note that $K\leq n_1$. Then let $\a_0$ be the minimal constant
given by Proposition \ref{pruM1}. Hence, since for a given $\a>0$ $T_{\a}$
belongs to the sequence $(T_k)_{1\leq k\leq K}$, $$\E\left[l(f^*,\ef) \ | \ \Ll_1,\ \Ll_2\right]\leq C'' \
\underset{\a>\a_0}{\inf}l(f^*,\eft{T_{\a}})+C_1' \
h^{-1}\frac{\log{K}}{n_t}+h^{-1}\frac{C_2'}{n_t}.$$ Starting from this inequality, if $\ef$ is constructed via M1, by using Proposition
\ref{pruM1} with $\a=2\a_0$ and by taking the expectation according to
$\Ll_2$, we obtain Theorem \ref{thfin} with the appropriate constants. \\

\ni Yet, if $\ef$ is constructed via M2, we apply Proposition \ref{pruM2} with $\a = 2 \a_0 \a_{n_1,V}$ and, for each $\delta \in ]0;1[$, $\xi=\log{(2\Sigma_{\a}/\delta)}$. Then, we obtain Theorem \ref{thfin} with the appropriate constants. 


\end{document}